\documentclass{article}

\usepackage{amsmath, graphicx, amsfonts, amsthm}
\usepackage{wrapfig}
\usepackage{framed}
\usepackage{comment}
\usepackage{subcaption}
\usepackage{booktabs}
\usepackage{IEEEtrantools}
\usepackage{mdframed}
 \newmdtheoremenv[
    tikzsetting={rounded corners=5pt},
    backgroundcolor=gray!10,
    linecolor=white,
    innertopmargin=4pt,
    innerbottommargin=4pt,
    skipabove=5pt,
    skipbelow=5pt
]{proposition}{Proposition}
 \newmdtheoremenv[
    tikzsetting={rounded corners=5pt},
    backgroundcolor=gray!10,
    linecolor=white,
    innertopmargin=4pt,
    innerbottommargin=4pt,
    skipabove=5pt,
    skipbelow=5pt
]{theorem}{Theorem}
 \newmdtheoremenv[
    tikzsetting={rounded corners=5pt},
    backgroundcolor=gray!10,
    linecolor=white,
    innertopmargin=4pt,
    innerbottommargin=4pt,
    skipabove=5pt,
    skipbelow=5pt
]{corollary}{Corollary}
\usepackage{MnSymbol}
\usepackage{multirow}
\usepackage[most]{tcolorbox}
\newtcolorbox{eqbox}{
  colback=gray!5,
  colframe=black!40,
  boxrule=0.5pt,
  arc=2pt,
  left=6pt,
  right=6pt,
  top=6pt,
  bottom=6pt
}
\usepackage{amsthm}
\theoremstyle{definition}

\theoremstyle{plain}
\newtheorem{lemma}{Lemma}[section]
\theoremstyle{remark}

 \usepackage[preprint]{neurips_2026}

\usepackage{todonotes}
\usepackage[utf8]{inputenc} % allow utf-8 input
\usepackage[T1]{fontenc}    % use 8-bit T1 fonts
\usepackage{hyperref}       % hyperlinks
\usepackage{url}            % simple URL typesetting
\usepackage{booktabs}       % professional-quality tables
\usepackage{amsfonts}       % blackboard math symbols
\usepackage{nicefrac}       % compact symbols for 1/2, etc.
\usepackage{microtype}      % microtypography
\usepackage{xcolor}         % colors

\title{Geometry-Induced Diffusion on Graphs: A Learnable Weighted Laplacian for Spectral GNNs}

\author{%
  Mia Zosso$^{*1}$ \And
  Ali Hariri$^{*1}$ \And
  Victor Kawasaki-Borruat$^{*1}$\thanks{Correspondence: victor.borruat@epfl.ch} \And
  Pierre-Gabriel Berlureau$^{*2}$ \And
  Pierre Vandergheynst$^{1}$ \\
  \\
  $^{1}$École Polytechnique Fédérale de Lausanne (EPFL), Lausanne, Switzerland \\
  $^{2}$École Normale Supérieure -- PSL, Paris, France \\
}

\begin{document}

\maketitle

\begin{abstract}
Long-range graph tasks are challenging for Graph Neural Networks (GNNs): global mechanisms such as attention or rewiring schemes can be computationally expensive, while deep local propagation is prone to vanishing gradients, oversmoothing, and oversquashing. The introduced $\mu$-ChebNet architecture is a simple spectral GNN that learns a node-wise weight function $\mu$ before applying ChebNet-style filters. The learned weighting $\mu$ induces a modified graph Laplacian which effectively changes the propagation geometry without altering the graph topology. This task-dependent geometry promotes preferred routes for information propagation, thereby helping long-range signals avoid highly contractive bottlenecks, and obviating the need for repeated layer stacking. In practice, we replace the fixed graph Laplacian $L$ by a learned operator $L_\mu$, keeping the proposed $\mu$-ChebNet architecture lightweight while making propagation task-adaptive. Furthermore, we provide a spectral analysis demonstrating how $\mu$ modulates propagation dynamics, and empirically observe improved performance on both synthetic long-range reasoning tasks and real-world graph benchmarks. The learned weight function is not only interpretable, but also offers a lightweight alternative to attention and rewiring for adaptive graph propagation.
\end{abstract}

\section{Introduction}
Graph Neural Networks (GNNs) \citep{scarselli2008graph,bruna2013spectral,defferrard2016convolutional} have achieved significant success in recent years, driving the development of novel architectures and investigations into their theoretical foundations.
Alongside these advances, important limitations have also come to light, most notably oversmoothing and oversquashing \citep{shao2023unifying, rusch2023survey, jin2025oversmoothing, attali2024rewiringtechniquesmitigateoversquashing}.
Oversmoothing makes node representations increasingly indistinguishable, while oversquashing compresses information from many distant nodes into limited-size embeddings.
These effects are particularly damaging for long-range tasks, where relevant signals and gradients must cross many propagation steps and may progressively weaken \citep{arroyo2025vanishing}.

% Graph Neural Networks (GNNs) \citep{scarselli2008graph,bruna2013spectral,defferrard2016convolutional} have achieved significant success in recent years, driving the development of novel architectures and investigations into their theoretical foundations.
% Along such advances, associated limitations have also come to light; most notably %representational collapse, 
% oversmoothing and oversquashing \citep{shao2023unifying, rusch2023survey, jin2025oversmoothing, attali2024rewiringtechniquesmitigateoversquashing}, which effectively ``collapse'' feature vectors into uninformative ones. These phenomena negatively impact the performance of GNNs on long-range tasks, as they are closely tied to information flow during backpropagation \citep{arroyo2025vanishing}.  %While their precise origins remain an active area of research \citep{arnaiz-rodriguez_oversmoothing_2025}

%MPNNs are a class of GNNs in which node representations are updated by aggregating messages from neighboring nodes through learnable functions \citep{gilmer2017neural}. This design offers flexibility but comes with limitations. In particular, their performance on long-range tasks is impacted, as the architecture intrinsically suffers from oversmoothing and oversquashing \citep{li2018deeper, alon2021on}, which have been shown to be consequences of the same contractive effect (vanishing gradients) during training \citep{arroyo2025vanishing}.

Recent work has interpreted GNN layers through the lens of continuous dynamics, viewing propagation as the discretization of an evolution equation on the graph \citep{poli2019graph}. This view has motivated diffusion-based formulations that recast message passing as the discretization of partial differential equations (PDEs) on graphs, enabling principled control over information propagation through learnable diffusivity \citep{chamberlain2021beltrami} or global attention mechanisms \citep{wu2023advective, wu2025transformers}. 
Although the continuous GNN diffusion perspective provides a principled foundation for designing expressive architectures, it often leads to high computational complexity due to  discretization. %{\color{teal}This is especially limiting on long-range tasks: when propagation follows the fixed diffusion geometry of the input graph, information may need to undergo many repeated mixing steps before reaching task-relevant nodes, increasing the risk of contraction, oversmoothing, and bottlenecked communication.}

Another line of work addresses long-range propagation by dynamically rewiring the graph, for example by adding task-dependent edges or modifying the connectivity structure \citep{gutteridge2023drew, barbero2023locality}. While effective, such methods can introduce additional preprocessing cost, alter the original edge set and they may change the semantics of the input graph.

Conversely, spectral GNNs \citep{defferrard2016convolutional, kipf_semi-supervised_2017} offer simple and computationally efficient architectures by leveraging the Fourier domain of the graph, only requiring to learn the fixed spectral filters' weights. Recent work on stabilization of the information flow across layers \citep{gravina_adgn, hariri2026return} has shown that stabilized spectral GNNs can outperform most other GNNs on long-range tasks, and it is believed that they possess competitive generalization qualities \citep{10.5555/3546258.3546530}. Such spectral GNNs, however, still rely on a \emph{fixed} graph Laplacian $L$, limiting their ability to adapt information flow to task-specific structure. In particular, \emph{they lack a mechanism for locally steering information propagation}.

In this work, we introduce a task-adaptive graph Laplacian for spectral GNNs, producing a rewiring-like effect without actually rewiring the graph. Our key idea is to learn a node-wise density $\mu$ that determines where information flows more easily on the graph. Instead of propagating with the fixed Laplacian $L$, we construct a weighted Laplacian $L_\mu$ by reweighting each existing edge according to the learned densities at its endpoints. This is analogous to diffusion in an inhomogeneous medium, where spatial variations in the medium locally guide propagation. 
%Expanding the corresponding weighted diffusion operator reveals a first-order drift-like term induced by variations of $\mu$, so t
The induced dynamics acquire a learned propagation bias without introducing an explicit vector field, thus mitigating traffic through contractive bottlenecks and breaking spectral degeneracies, all while leaving the graph topology unchanged (Figure \ref{fig:SBM-heat-example}). We learn $\mu$ upstream and use the induced operator $L_\mu$ in place of the fixed Laplacian inside ChebNet-style spectral filters. This yields $\mu$-ChebNet and $\mu$-Stable-ChebNet, which combine task-adaptive propagation with efficient polynomial filtering. Rather than learning separate node-to-node communication weights as in attention, our method learns one density value per node. This induces edge weights on the fixed graph and biases transport toward task-relevant regions while keeping the model lightweight.
\begin{figure}[t]
    \centering

    % Left column: two stacked figures
    \begin{minipage}[h!]{0.62\textwidth}
        \vspace{0pt}
        \centering

        \begin{subfigure}[t]{\linewidth}
            \centering
            \includegraphics[width=\linewidth]{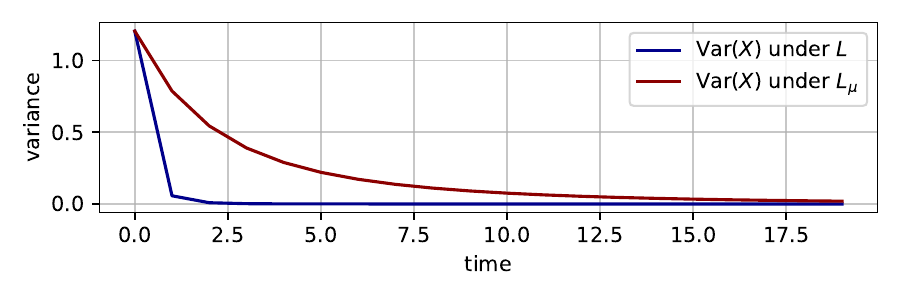}
            \vspace{-20pt}
            \caption{Variance of the signal along heat diffusion driven by $L$ and $L_\mu$.}
            \label{fig:top}
        \end{subfigure}

        \begin{subfigure}[t]{\linewidth}
            \centering
            \includegraphics[width=\linewidth]{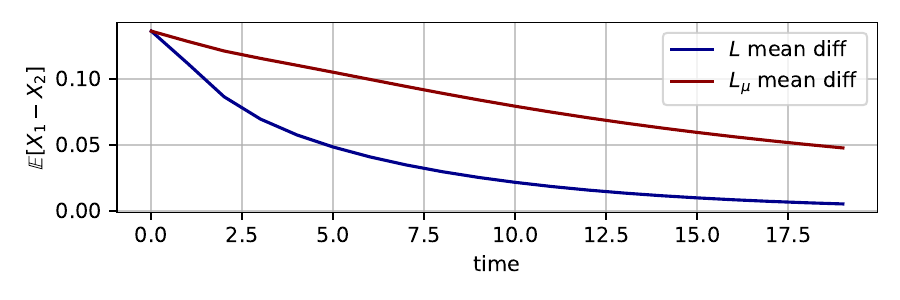}
            \vspace{-20pt}
            \caption{Difference in the mean of the signal between the SBM's two communities along heat diffusion driven by $L$ and $L_\mu$.}
            \label{fig:bottom}
        \end{subfigure}
    \end{minipage}
    \hfill
    % Right column: global caption
    \begin{minipage}[t]{0.34\textwidth}
        \vspace{-100pt}
        \caption{
            Filtering via heat diffusion using $L$ (blue) and $L_\mu$ (red) on the Stochastic Block Model, where $\mu$ has a sharp variation at the community interface. The initial signal is white Gaussian noise. We observe slower variance contraction (a) under $L_\mu$, indicating successful oversmoothing mitigation, as well as slower decay of the mean (b) between the signal on different communities $X_1$ and $X_2$, respectively. This simple reweighting of the graph Laplacian significantly changes the information loss.
        }
        \label{fig:SBM-heat-example}
    \end{minipage}
    \vspace{-10pt}
\end{figure}

Our work makes the following contributions:
\begin{enumerate}
\item \textbf{A task-adaptive Laplacian for spectral GNNs.}
We derive a discrete Laplacian $L_\mu$ whose entries depend on a learnable node-wise density $\mu$. This replaces the fixed graph Laplacian with a learned propagation operator that preserves the original topology.

\item \textbf{Spectral control of propagation dynamics.}
We characterize how the learned density $\mu$ reshapes the spectrum of $L_\mu$. Under a fixed normalization of $\mu$, this corresponds to redistributing spectral energy across modes. This changes the relative timing with which graph Fourier components are smoothed, providing a mechanism for delaying overmixing and modulating bottleneck-sensitive propagation.

    \item \textbf{A new family of spectral GNNs.} 
    % We introduce $\mu$-ChebNet and $\mu$-Stable-ChebNet, two instantiations of a broader idea: replacing the fixed Laplacian in a spectral GNN by the learned operator $L_\mu$. These models bridge spectral and message-passing viewpoints: they retain the efficiency of polynomial spectral filtering, while propagation remains a local diffusion process along existing edges.
%A new family of spectral GNNs.}
We introduce $\mu$-ChebNet and $\mu$-Stable-ChebNet, two architectures that jointly learn (i) propagation dynamics through the density $\mu$ and (ii) spectral filters through Chebyshev polynomials. These models bridge spectral and message-passing viewpoints: they retain the efficiency of polynomial spectral filtering, while propagation remains a local diffusion process along existing edges, now governed by a learned task-adaptive Laplacian rather than the fixed graph Laplacian.
    \item \textbf{Extensive empirical evidence and interpretability.}
Across synthetic long‑range reasoning tasks and real-world benchmarks, our models are competitive with strong spectral baselines, with the most consistent gains obtained when the learned Laplacian is combined with stability constraints. The learned node-wise density $\mu$ also provides an interpretable graph signal, revealing where propagation is strengthened or attenuated for a given task.
\end{enumerate}

\paragraph{Outline}  Section \ref{sec:background} reviews background on graph Laplacians, diffusive dynamics, and Chebyshev spectral filters. Section \ref{sec:BE-graph-Laplacians} introduces the weighted Laplacian on graphs, analyzes its spectral properties, and shows how it enhances standard spectral GNNs, leading to our $\mu$-ChebNet architecture whose results we present in Section \ref{seq:experiments}. The latter benchmarks our model on synthetic and real-world tasks. We conclude by illustrating how the learned weight function provides a simple and interpretable mechanism for task-dependent routing.
%To the best of the authors' knowledge, this is the first instance of a mathematical bridge between MPNNs and spectral GNNs. Indeed, just like in MPNNs, during learning the model constructs a way to locally aggregate information from neighbors in order to update node features. However in our case, this mechanism is done upstream of the spectral GNN.
%Building on this insight, we propose a novel architecture that parametrizes the  with a shallow GCN and jointly learns a downstream ChebNet. Extensive experiments demonstrate that our approach outperforms strong baselines on standard benchmarks, achieving state-of-the-art results. Furthermore, our theoretical analysis explains how the new diffusion mechanism enhances GNN performance. Interestingly, our mechanism bridges the gap between MPNNs with learnt local aggregation mechanism and the simple signal processing architecture of spectral GNNs. Finally, the learned  provides an interpretable, attention-like signal that sheds light on how the downstream GNN solves specific tasks, which we illustrate through case studies.

\section{Related Work }
\paragraph{Mitigating Oversmoothing \& Oversquashing.} %MPNNs are a class of GNNs in which node representations are updated by aggregating messages from neighboring nodes through learnable functions \citep{gilmer2017neural}. This design offers flexibility but comes with limitations. In particular, their performance on long-range tasks is impacted, as the architecture intrinsically suffers from oversmoothing and oversquashing \citep{li2018deeper, alon2021on}, which have been shown to be consequences of the same contractive effect (vanishing gradients) during training \citep{arroyo2025vanishing}.
Recent works propose PDE-inspired propagation dynamics, with an emphasis on controlling information flow through diffusion and transport mechanisms. 
GRAND casts GNN layers as discretizations of diffusion equations on graphs, where the choice of diffusivity and numerical scheme shapes propagation \citep{chamberlain2021grand}. 
In \citet{wusupercharging}, this view is extended through advective diffusion: global attention is interpreted as a non-local diffusion mechanism that captures latent interactions, while local message passing acts as an advection-like term that keeps the dynamics tied to the observed graph topology. 
While expressive and theoretically motivated, such approaches typically rely on attention-based coupling or the solution of continuous-time dynamics, which can introduce substantial computational overhead. 
Graph Transformers (GTs) \citep{rampavsek2022recipe} pursue a related goal from an architectural perspective, combining local aggregation with global attention to improve long-range communication. 
However, standard global attention scales quadratically in the number of nodes, motivating sparse GTs \citep{shirzad2023exphormer, shirzad2024even, chen2022nagphormer} and dynamical graph-rewiring schemes \citep{arnaiz2022diffwire}.

\paragraph{Efficient Spectral GNNs.} Spectral GNNs (such as ChebNets \citep{defferrard2016convolutional}) have undergone a recent revival in relevance for both their computational efficiency and ability to capture long-range interactions via constructive graph Laplacian-based filters when stabilized \citep{gravina_adgn}. These models are simple to implement and analyze due to the rich tools of spectral graph theory \citep{chung1997spectral}, and were also shown to consistently outperform MPNNs, graph rewiring techniques, and GTs on several long-range tasks \citep{hariri2026return}. 
%,ChebNets have been shown to be prone to instabilities as layers stack. \citet{hariri_return_2025} have recently overcome this issue by proposing enforcing volume-preservation of features \citep{gravina_adgn}. This Stable-ChebNet architecture was also shown to consistently outperform MPNNs, graph rewiring techniques, and GTs on several long-range tasks. %, highlighting the continued relevance of spectral models. 
%{\color{teal} The generalization power of spectral GNNs to graph structure unseen at training is also largely still an open question. Experiments, including our own (Section \ref{seq:experiments}), suggest that spectral GNNs transfer to unseen graphs and this is supported by the theoretical findings of \cite{wang2025generalization} and \cite{10.5555/3546258.3546530}.}
%While this line of work improves the stability and expressivity of spectral filters, it usually keeps the underlying graph Laplacian fixed. A complementary direction instead modifies the Laplacian itself to encode additional structure in the propagation operator.

\paragraph{Modified Graph Laplacians.} Rather than designing new architectures, there has been considerable research done in finding different graph Laplacians, with hopes of enriching information propagation across the graph. Modern approaches include degree-dependent penalties \citep{gong2015deformed}, attractive \& repulsive effects modeling \citep{li2023signed}, complex phases for orientation \citep{zhang2021magnet, fiorini2023sigmanet}, or control over spectral scaling \citep{maskey2023fractional}. All the aforementioned approaches remain constrained by the given graph structure and use propagation operators that are \emph{fixed} independently of the task. In contrast, our adaptive Laplacian $L_\mu$ introduces a learnable node-wise     weight function $\mu$, yielding spatially varying, task-dependent transport while preserving the original graph topology. Rather than shortening paths by adding edges, this changes the effective transport geometry on the existing graph, creating preferred diffusion routes that can reduce the amount of repeated mixing required for long-range communication.

%The deformed Laplacian \cite{gong2015deformed} introduces degree-dependent penalties, while signed Laplacians \cite{li2023signed} separate positive and negative interactions to model attractive and repulsive effects. Magnetic Laplacians \cite{zhang2021magnet} further incorporate edge orientation through complex phases, and related variants such as magnetic signed Laplacians and SigMaNet \cite{fiorini2023sigmanet} combine directionality with signed or weighted interactions. Fractional Laplacians \cite{maskey2023fractional}, by contrast, replace $L$ with $L^\alpha$, giving global control over spectral scaling.

\section{Background}\label{sec:background}
In this section we recall elements of calculus on graphs, which will allow us to formulate drift-diffusion dynamics on graphs and derive both the classic notion of a graph Laplacian $L$, as well as the task-adaptive weighted Laplacian $L_\mu$. We will also briefly recall the ChebNet architecture, as well as motivate this specific choice for the architecture we will subsequently introduce.
\subsection{Calculus on Graphs} \label{ssec:graph-calculus}
Let $\mathcal G = (V,E)$ be an undirected graph with $|V| = n$ vertices and $|E| = m$ edges. We denote by $\mathcal{H}_V$ the Hilbert space of square-integrable \emph{node signals }$f:V \rightarrow \mathbb{R}$, with corresponding scalar product $\langle f, g\rangle := \sum_{i=1}^nf(i)g(i)$, for $i$ running over the nodes in $V$. Likewise, $\mathcal{H}_E$ will denote the Hilbert space of square-integrable \emph{edge signals} $G : E \rightarrow \mathbb{R}$. To define signed edge differences, we fix an arbitrary orientation of each undirected edge. This orientation is only a bookkeeping device and does not affect the Laplacian obtained below. The corresponding scalar product is given by $\llangle F, G \rrangle := \sum_{e=1}^m F(e) G(e)$, for $e$ running over the edges in $E$. We will reserve the use of capital letters for such edge signals. 

For an oriented edge $e=(i,j)$, the \emph{gradient operator on $\mathcal G$} is denoted  $\nabla^\mathcal G:\mathcal H_V \to \mathcal H _E $ and acts on a node signal $f\in\mathcal H_V$ as 
\begin{IEEEeqnarray}{rCl}\label{eq:graph-grad}
 \nabla^\mathcal G_{ij} f & := & f(j)-f(i)
\end{IEEEeqnarray}
for two connected nodes $i,j\in V$. The adjoint of $\nabla^\mathcal G$ with respect to $\langle\cdot,\cdot\rangle$ as above is given by ${\nabla^\mathcal G}^\star:\mathcal H_E \to \mathcal H_V$ acting on $F\in\mathcal H_E$ as
\begin{equation}\label{eq:graph-grad-adj}
    ({\nabla^\mathcal G}^*F)(i)=\sum_{e=(j,i)}F(e)-\sum_{e=(i,j)}F(e),
\end{equation}
where the first sum is over edges oriented toward $i$ and the second over edges oriented away from $i$.
\begin{proposition}
\label{prop:grad-div-adjoint}
For any $f\in\mathcal H_V$ and $F\in\mathcal H_E$, we have
$\llangle \nabla^\mathcal G f,F\rrangle=\langle f,{\nabla^\mathcal G}^*F\rangle.$
\end{proposition}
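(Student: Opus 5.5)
The plan is a direct computation that expands the edge inner product and regroups the sum by vertices. Fix the orientation used in the definition, so each edge $e$ is an ordered pair $(i,j)$ with tail $i$ and head $j$. By \eqref{eq:graph-grad},
\begin{equation*}
\llangle \nabla^\mathcal G f,F\rrangle=\sum_{e=(i,j)\in E}\bigl(f(j)-f(i)\bigr)F(e)=\sum_{e=(i,j)\in E} f(j)F(e)-\sum_{e=(i,j)\in E} f(i)F(e).
\end{equation*}
Both sums are finite, so this split needs no convergence justification.

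The key step is a change of summation order: from summing over edges to summing over vertices. Every edge has exactly one head and exactly one tail. So the first sum equals $\sum_{k\in V} f(k)\sum_{e=(j,k)}F(e)$, grouping each edge under its head $k$. Likewise, the second sum equals $\sum_{k\in V} f(k)\sum_{e=(k,j)}F(e)$, grouping each edge under its tail $k$. Each edge is counted exactly once in each regrouping, which is the only point requiring care.

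Subtracting the two regrouped sums gives
\begin{equation*}
\sum_{k\in V} f(k)\Bigl(\sum_{e=(j,k)}F(e)-\sum_{e=(k,j)}F(e)\Bigr)=\sum_{k\in V} f(k)\,({\nabla^\mathcal G}^*F)(k)=\langle f,{\nabla^\mathcal G}^*F\rangle,
\end{equation*}
using \eqref{eq:graph-grad-adj}.

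The main obstacle is only bookkeeping: matching the sign convention in \eqref{eq:graph-grad}, where the head is weighted by $+1$, with the incoming-minus-outgoing convention in \eqref{eq:graph-grad-adj}. These conventions are consistent, as the computation shows.

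As an equivalent check, one can write $\nabla^\mathcal G$ as the $m\times n$ incidence matrix $B$ with $B_{e,j}=1$ and $B_{e,i}=-1$ for $e=(i,j)$. Then ${\nabla^\mathcal G}^*$ is $B^\top$, whose action at vertex $k$ reproduces \eqref{eq:graph-grad-adj}, and the identity reduces to $\langle Bf,F\rangle=\langle f,B^\top F\rangle$.
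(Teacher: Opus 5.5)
Your proof is correct and follows the paper's argument: you expand $\llangle \nabla^\mathcal G f,F\rrangle$ over the oriented edges and regroup each term under its head and its tail, which yields the incoming-minus-outgoing sum that defines ${\nabla^\mathcal G}^*F$. You spell out the regrouping step that the paper does in a single line, and the incidence-matrix remark is a valid but optional cross-check.
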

\begin{proposition}
\label{prop:graph-laplacian}
The positive graph Laplacian induced by the gradient is 
$L := \textnormal{div}^\mathcal G\nabla^\mathcal G$, where the divergence operator is given by $\textnormal{div}^{\mathcal G}:=-{\nabla^\mathcal G}^*$. We immediately notice that this is analogous to the Euclidean construction of the Laplace operator $\Delta:=\textnormal{div}\nabla$. Moreover, we have
\begin{equation}
\label{eq:graph-laplacian}
L=D-A,
\end{equation}
where $A$ is the adjacency matrix of $\mathcal G$ and $D$ is the degree matrix with entries $D_{ii}=\sum_{j=1}^n A_{ij}$. This matches the well-known definition of the \emph{combinatorial graph Laplacian} \citep{chung1997spectral}. 
\end{proposition}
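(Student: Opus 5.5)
The plan is to compute the operator ${\nabla^\mathcal G}^*\nabla^\mathcal G$ entrywise, node by node, directly from the definitions \eqref{eq:graph-grad} and \eqref{eq:graph-grad-adj}. Then I read off its matrix in the standard basis of $\mathcal H_V$. Proposition \ref{prop:grad-div-adjoint} is used only to confirm that the adjoint is the one given in \eqref{eq:graph-grad-adj}. This makes ${\nabla^\mathcal G}^*\nabla^\mathcal G$ a self-adjoint operator on $\mathcal H_V$, so it has a symmetric matrix.

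Fix $f\in\mathcal H_V$ and a node $i$. I split the edges incident to $i$ into two groups according to the chosen orientation: edges $e=(j,i)$ pointing into $i$, and edges $e=(i,j)$ pointing out of $i$.
\begin{itemize}
\item For an incoming edge $e=(j,i)$, the gradient is $(\nabla^\mathcal G f)(e)=f(i)-f(j)$. It enters \eqref{eq:graph-grad-adj} with a plus sign.
\item For an outgoing edge $e=(i,j)$, the gradient is $(\nabla^\mathcal G f)(e)=f(j)-f(i)$. It enters with a minus sign, so it also contributes $f(i)-f(j)$.
\end{itemize}
The key observation is that both orientations give the same contribution. Summing over all neighbours therefore yields
\begin{equation*}
({\nabla^\mathcal G}^*\nabla^\mathcal G f)(i)=\sum_{j\sim i}\bigl(f(i)-f(j)\bigr)=D_{ii}f(i)-\sum_{j}A_{ij}f(j).
\end{equation*}
This is exactly $((D-A)f)(i)$. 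It also shows that the result does not depend on the orientation, which justifies the earlier remark that the orientation is only a bookkeeping device.

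The only delicate point is the sign convention. With $\textnormal{div}^\mathcal G=-{\nabla^\mathcal G}^*$, the computation above gives $\textnormal{div}^\mathcal G\nabla^\mathcal G=-(D-A)$. This is the graph analogue of the negative semidefinite Euclidean $\Delta=\textnormal{div}\nabla$. The \emph{positive} Laplacian is therefore $-\textnormal{div}^\mathcal G\nabla^\mathcal G={\nabla^\mathcal G}^*\nabla^\mathcal G=D-A$, and I would state the identity in this form.

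To confirm positivity, I would apply Proposition \ref{prop:grad-div-adjoint} with $F=\nabla^\mathcal G f$. This gives
\begin{equation*}
\langle f,(D-A)f\rangle=\llangle\nabla^\mathcal G f,\nabla^\mathcal G f\rrangle=\sum_{(i,j)\in E}\bigl(f(j)-f(i)\bigr)^2\ge 0.
\end{equation*}
This matches the combinatorial Laplacian of \citet{chung1997spectral}.
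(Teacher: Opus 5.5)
Your proof is correct and matches the paper's argument: the paper also splits the edges at node $i$ into incoming and outgoing ones, notes that both contribute $f(i)-f(j)$, and concludes ${\nabla^\mathcal G}^*\nabla^\mathcal G=D-A$; your positivity check via Proposition~\ref{prop:grad-div-adjoint} is a small addition. Your sign remark is also right: with $\textnormal{div}^\mathcal G=-{\nabla^\mathcal G}^*$ one gets $\textnormal{div}^\mathcal G\nabla^\mathcal G=A-D$, which the paper's proof glosses over when it says the divergence identity ``follows'', so the positive Laplacian $D-A$ should be read as ${\nabla^\mathcal G}^*\nabla^\mathcal G=-\textnormal{div}^\mathcal G\nabla^\mathcal G$.
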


Eq. \eqref{eq:graph-laplacian} is particularly useful, as such an $L$ encodes
the connectivity structure of the graph, is symmetric, positive semidefinite, and admits an eigenvalue decomposition of the form
%\begin{equation}\label{eq:basis}
$L=U \Lambda U^\top$,
%\end{equation}
where $\Lambda = \textnormal{diag}(\lambda_0, \lambda_1, \dots, \lambda_n)$ is the diagonal matrix of ordered eigenvalues $0\leq \lambda_0 \leq \lambda_1 \leq \dots \leq \lambda_{n-1}$ and $U$ is an orthogonal matrix of eigenvectors.
\subsection{Weighted Diffusions Steer \& Propagate Information}\label{ssec:weighted-diffusions}
The diffusion equation models the evolution of a quantity $u: \mathbb R^d \times \mathbb R_{\geq0} \to \mathbb R$ that spreads out depending on a diffusion term. Starting from the conservative continuity equation
\begin{equation}
    \partial_t u(x,t) = -\nabla\cdot j(x,t),
\end{equation}
where $j(x,t) = -\mu(x)\nabla u(x,t)$ where $\mu: \mathbb R^d \to \mathbb R_+$ is the spatially-dependent diffusivity, we get the following evolution
\begin{equation}\label{eq:weighted-adv-diff}
    \partial_t u(x,t) = \mu(x)\Delta u(x,t) +\nabla \mu(x) \cdot\nabla u(x,t).
\end{equation}
When $\mu \equiv 1$, Eq.\eqref{eq:weighted-adv-diff} reduces to
$\partial_t u=\Delta u$, with solution $u(t)=e^{t\Delta}u(0)$. When $\mu$ varies in space, however, the right-hand side of \eqref{eq:weighted-adv-diff} develops a first-order term in $u$ and therefore has the form of a drift or advection contribution. In this sense, an apparently purely diffusive equation can induce drift-like dynamics once diffusion takes place in an inhomogeneous medium.

\begin{framed}
This observation is the basis of our approach: instead of prescribing an external drift, we \emph{learn the geometry that induces it}. A positive node-wise weight function $\mu$ biases diffusion such that drift-like transport emerges from the operator itself. Learned from data, $\mu$ becomes a task-dependent geometric bias that steers information flow through the graph.
\end{framed}
\subsection{The ChebNet Architecture}
\label{sec:chebnet}

The previous section suggests a route to task-adaptive propagation: rather than solving an explicit advection-diffusion PDE or introducing attention-like transport mechanisms, one may seek to encode the transport bias directly into the diffusion operator $\Delta \mapsto \mu\Delta$. On graphs, the natural diffusion operator is the Laplacian $L$ \eqref{eq:graph-laplacian}. This makes spectral GNNs a natural setting for our construction, since their propagation rules are built directly from $L$ and its spectrum.
    
ChebNet is particularly well suited to this perspective. It defines graph convolutions through spectral 
filters of the Laplacian, while avoiding the cost of an explicit eigendecomposition through a polynomial 
approximation. 
\begin{comment}
{\color{teal}Consider the eigendecomposition of the Laplacian as in Eq. \eqref{eq:basis}. For node 
signals $x,y \in \mathbb{R}^n$, the graph convolution is defined in the Fourier basis of $L$ as
\begin{equation}
    x *_\mathcal G y 
    =
    U\big((U^\top x) \odot (U^\top y)\big),
\end{equation}
where $\odot$ denotes the element-wise Hadamard product. More generally, a signal 
$x \in \mathbb{R}^{n \times d}$ can be filtered spectrally as
\begin{equation}
    y = U g_\theta(\Lambda) U^\top x,
\end{equation}
where $g_\theta$ is a learnable spectral filter.}
\end{comment}
ChebNet approximates this filter using Chebyshev polynomials, yielding
\begin{equation}
    g_\theta(\Lambda)
    \approx
    \sum_{k=0}^{K} \Theta_k T_k(\widetilde{\Lambda}),
    \qquad
    \widetilde{\Lambda}
    =
    \frac{2\Lambda}{\lambda_{\max}} - I,
\end{equation}
where $T_k$ is the $k$-th Chebyshev polynomial and $\widetilde{\Lambda}$ rescales the spectrum to 
the interval $[-1,1]$. Since polynomials commute with the eigendecomposition, the resulting layer can 
be written directly in terms of the Laplacian:
\begin{equation}
    y=\sum_{k=0}^{K}
    T_k(\widetilde{L})x\Theta_k,
    \qquad
    \widetilde{L}
    =
    \frac{2L}{\lambda_{\max}} - I .
\end{equation}
This form is crucial: the filter depends only on repeated applications of the Laplacian, and can therefore 
be evaluated in $O(Km)$ time without computing the eigenvectors explicitly~\cite{defferrard2016convolutional}.

Consequently, any principled modification of the Laplacian immediately induces a corresponding 
modification of the propagation geometry, while leaving the spectral architecture essentially unchanged. 
This makes ChebNet an ideal backbone for our construction: once the learned weight function $\mu$ is used to 
bias the graph Laplacian $L\mapsto L_\mu$, the resulting geometry can be inserted directly into the Chebyshev 
filtering pipeline.

\section{From Weighted Diffusion to a Node-Weighted Graph Laplacian}\label{sec:BE-graph-Laplacians}
We now show how to construct biased diffusions on graphs, as done in Section \ref{ssec:weighted-diffusions}. 
\subsection{Weighted Graph Laplacian}\label{ssec:weighted-graph-laplacian}
Indeed, introducing an external drift term of the form $v(x)\cdot\nabla u(x,t)$, where $v:\mathbb R^d\to\mathbb R^d$ is a vector field, into Eq. \eqref{eq:weighted-adv-diff} is not well-defined on graph domains, unlike the continuous setting \cite{desbrun2005discrete}. 
%On a graph, however, node signals live on vertices, whereas discrete gradients naturally live on edges. Defining a vector field therefore requires additional non-canonical choices, such as an orientation. 
Instead, we turn to Dirichlet forms, which admit both $\mathbb R^d$ and graph analogs. In finite dimensions, once an inner product $\langle\cdot,\cdot\rangle$ is fixed, a Dirichlet form $\mathcal D$ is in one-to-one correspondence  \citep{fukushima2011dirichlet} with a self-adjoint positive semidefinite Laplacian $L$ through the representation
\begin{equation}
    \mathcal D(f,g) = \langle f,Lg\rangle,
\end{equation}
where $f,g\in L^2(\mathbb R^d; \mathbb R)$ in the continuous case, and $f,g\in \mathcal{H}_V$ in the graph case. 

\paragraph{Continuous Case.} Let $\Omega \subset \mathbb R^d$ be a bounded open domain, and let $f,g:\Omega\subset\mathbb{R}^d\to\mathbb{R}$ be smooth functions vanishing on $\partial\Omega$. For a positive weight function $\mu:\Omega \to \mathbb R_+$, the Dirichlet form given by 
\begin{IEEEeqnarray}{rCl}\label{eq:smooth-dirichlet-form}
    \mathcal D_\mu(f,g) & := & \frac{1}{2}\int_\Omega \mu(x)\nabla f(x)\cdot \nabla g(x) dx  \nonumber \\
    & = & \langle f,\tilde{\Delta}_\mu g\rangle_{L^2( dx)}, 
\end{IEEEeqnarray}
is in one-to-one correspondence to the second-order operator 
\begin{equation}\label{eq:cont-weighted-laplacian}
    \tilde \Delta_\mu f=-\frac12\mu\Delta f-\frac12\nabla\mu\cdot\nabla f,
\end{equation}
when considering the usual $L^2$ inner product. This particular choice of inner product allows us to recover the emerging first-order drift term, previously derived in Eq. \eqref{eq:weighted-adv-diff}. We now turn to the case of graphs.

\paragraph{Graph Case.}
Analogously to the continuous case above, the Dirichlet form on a graph $\mathcal G$ with positive node-wise weight function $\mu:V\to\mathbb{R}_+$ is given by 
\begin{equation}
\label{eq:graph-dirichlet-form}
\mathcal D_\mu^\mathcal G(f,g)=\frac12\sum_i \mu_i\sum_{j\sim i}\nabla^\mathcal G_{ij}f\,\nabla^\mathcal G_{ij}g,
\end{equation}
where $\nabla^\mathcal{G}$ is defined as in Eq. \eqref{eq:graph-grad}. In the following result, we derive the corresponding graph operator $L_\mu$ in closed form, associated to the edge signal inner product $\llangle\cdot,\cdot\rrangle$ from Section \ref{sec:background}.

\begin{theorem}[Closed form of $L_\mu$]
\label{prop:graph_BE_closed_form}
Let $\mu:V\to\mathbb{R}_+$ and let $\mathcal D_\mu^\mathcal G$ be the weighted graph Dirichlet form defined in \eqref{eq:graph-dirichlet-form}. Then there exists a unique symmetric positive semidefinite matrix $L_\mu$ such that $\mathcal D_\mu^\mathcal{G}(f,g)=f^\top L_\mu g$. \textbf{Moreover, $L_\mu$ is the weighted graph Laplacian}
\begin{equation} 
    L_\mu=D_\mu-A_\mu,
\end{equation}
where
$A_\mu= M_\mu\odot A$,  $(M_\mu)_{ij}=\frac{\mu_i+\mu_j}{2}$,
and
$D_\mu=\operatorname{diag}(A_\mu\mathbf 1)$.
\end{theorem}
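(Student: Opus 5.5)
We prove this by expanding the Dirichlet form \eqref{eq:graph-dirichlet-form} as an explicit bilinear form on $\mathcal H_V\cong\mathbb R^n$ and reading off its matrix. The form is bilinear in $(f,g)$, so it equals $f^\top B g$ for a unique matrix $B$. Because $\nabla^\mathcal G_{ij}f\,\nabla^\mathcal G_{ij}g$ is symmetric under exchanging $f$ and $g$, $\mathcal D_\mu^\mathcal G$ is a symmetric bilinear form, and $B$ may be taken symmetric. Uniqueness among symmetric matrices follows from polarization: a symmetric matrix is determined by its bilinear form, since $B_{ij}=e_i^\top Be_j$. Positive semidefiniteness is immediate, because
\[
\mathcal D_\mu^\mathcal G(f,f)=\tfrac12\sum_i\mu_i\sum_{j\sim i}(f(j)-f(i))^2\ge 0
\]
whenever $\mu>0$.

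The key step is to identify $B$ with $D_\mu-A_\mu$. The sum $\sum_i\sum_{j\sim i}$ runs over ordered pairs, so each undirected edge $\{i,j\}$ appears twice: once weighted by $\mu_i$ and once by $\mu_j$. The squared (or bilinear) difference is independent of the orientation used in \eqref{eq:graph-grad}, so we can regroup by undirected edges:
\[
\mathcal D_\mu^\mathcal G(f,g)=\sum_{\{i,j\}\in E}\frac{\mu_i+\mu_j}{2}\,(f(j)-f(i))(g(j)-g(i))=\sum_{\{i,j\}\in E} w_{ij}\,(f(j)-f(i))(g(j)-g(i)),
\]
with $w_{ij}=(M_\mu)_{ij}A_{ij}=(A_\mu)_{ij}$.

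Next we invoke the standard identity for weighted Laplacians, the weighted analogue of Proposition \ref{prop:graph-laplacian}. Expanding $(f_j-f_i)(g_j-g_i)=f_ig_i+f_jg_j-f_ig_j-f_jg_i$ and collecting terms, the diagonal contributions give $\sum_i f_ig_i\sum_j w_{ij}=f^\top D_\mu g$, where $D_\mu=\operatorname{diag}(A_\mu\mathbf 1)$. The cross terms give $-\sum_{i,j}f_i w_{ij} g_j=-f^\top A_\mu g$. Hence $\mathcal D_\mu^\mathcal G(f,g)=f^\top(D_\mu-A_\mu)g$. The matrix $D_\mu-A_\mu$ is symmetric because $M_\mu$ and $A$ are symmetric, so by uniqueness $L_\mu=D_\mu-A_\mu$.

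Equivalently, this can be phrased through Proposition \ref{prop:grad-div-adjoint}: introduce the diagonal edge-weight operator $W_\mu$ with entries $(\mu_i+\mu_j)/2$, so that $\mathcal D_\mu^\mathcal G(f,g)=\llangle W_\mu\nabla^\mathcal G f,\nabla^\mathcal G g\rrangle=\langle f,{\nabla^\mathcal G}^*W_\mu\nabla^\mathcal G g\rangle$, and then repeat the computation behind Proposition \ref{prop:graph-laplacian} with weights.

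The main obstacle is purely bookkeeping: correctly handling the double counting, i.e.\ checking that the factor $\tfrac12$ combined with the two ordered occurrences of each edge yields exactly the average $(\mu_i+\mu_j)/2$ and not its double. We settle this by checking the case $\mu\equiv 1$, which must reduce to $L=D-A$.
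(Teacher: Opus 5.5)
Your proposal is correct and follows essentially the same route as the paper. Both arguments regroup the ordered double sum by undirected edges to obtain the averaged weights $\frac{\mu_i+\mu_j}{2}=(A_\mu)_{ij}$, then expand the bilinear edge sum into $f^\top(D_\mu-A_\mu)g$; the paper folds the terms together using the symmetry of $A_\mu$, while you collect diagonal and cross terms, which is the same bookkeeping. You also spell out existence, uniqueness and positive semidefiniteness, which the paper's proof leaves implicit. On uniqueness, $B_{ij}=\mathcal D_\mu^\mathcal G(e_i,e_j)$ already determines $B$ among all matrices, so polarization is not needed.
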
 
\begin{proof}
    See Appendix \ref{app:BE_graph_derivation}.
\end{proof}
Theorem \ref{prop:graph_BE_closed_form} is the weighted analogue of Eq.~\eqref{eq:graph-laplacian}: the standard Laplacian $L=D-A$ is recovered as the special case $A_\mu=A$. The theorem shows that the edge weights are induced by the node-wise weight function $\mu$: each edge $(i,j)$ receives $\frac12(\mu_i+\mu_j)$ from its two endpoint values. In particular, the original graph topology is preserved, since $A_{ij}=0$ implies $(A_\mu)_{ij}=0$. The weight function therefore changes only the ease of transport along existing edges.
The next result makes the analogy with the continuous identity from Eq. \eqref{eq:cont-weighted-laplacian} explicit.
\begin{theorem}[Drift-diffusion decomposition]
\label{prop:graph_BE_decomposition}
Using the graph calculus from Section \ref{ssec:graph-calculus}, the action of $L_\mu$ can be
written as
\begin{equation}\label{eq:graph-weighted-laplacian}
(L_\mu f)_i = \mu_i(Lf)_i - \frac12\sum_{j\sim i} \nabla^G_{ij}\mu\,\nabla^G_{ij}f,
\end{equation}
where $L$ is the graph Laplacian as in Eq. \eqref{eq:graph-laplacian}.
\end{theorem}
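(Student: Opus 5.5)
The plan is to start from the closed form of Theorem \ref{prop:graph_BE_closed_form} rather than going back to the Dirichlet form \eqref{eq:graph-dirichlet-form}. Writing $L_\mu = D_\mu - A_\mu$ entrywise gives, for every node $i$,
\[
(L_\mu f)_i \;=\; \sum_{j\sim i} \frac{\mu_i+\mu_j}{2}\,\bigl(f_i - f_j\bigr).
\]
This holds because $(D_\mu)_{ii} = \sum_{j} A_{ij}\tfrac{\mu_i+\mu_j}{2}$ and the off-diagonal entries are $-A_{ij}\tfrac{\mu_i+\mu_j}{2}$. So the whole statement reduces to an algebraic rearrangement of this single sum.

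The key step is to split the edge weight symmetrically around the value at the centre node:
\[
\frac{\mu_i+\mu_j}{2} \;=\; \mu_i + \frac{\mu_j-\mu_i}{2}.
\]
Substituting this into the sum produces two terms.

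1. The first term is $\mu_i \sum_{j\sim i}(f_i-f_j)$. By Proposition \ref{prop:graph-laplacian}, this is $\mu_i (D-A)f$ evaluated at $i$, which is exactly $\mu_i (Lf)_i$.
2. The second term is $\tfrac12\sum_{j\sim i}(\mu_j-\mu_i)(f_i-f_j)$. By the definition \eqref{eq:graph-grad}, $\mu_j-\mu_i = \nabla^{\mathcal G}_{ij}\mu$ and $f_i - f_j = -\nabla^{\mathcal G}_{ij} f$. The term therefore equals $-\tfrac12\sum_{j\sim i}\nabla^{\mathcal G}_{ij}\mu\,\nabla^{\mathcal G}_{ij}f$.

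Adding the two terms gives \eqref{eq:graph-weighted-laplacian}.

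The only point that needs care, and the one I expect to be the (mild) obstacle, is the orientation and sign bookkeeping. Eq. \eqref{eq:graph-grad} is stated for an oriented edge, whereas here the sum runs over all neighbours $j$ of $i$. This is handled by reading $\nabla^{\mathcal G}_{ij}$ as the difference "value at $j$ minus value at $i$" for each neighbour $j$. The product $\nabla^{\mathcal G}_{ij}\mu\,\nabla^{\mathcal G}_{ij}f$ is invariant under reversing the orientation, since both factors flip sign. The formula is therefore independent of the bookkeeping orientation, as the paper promised in Section \ref{ssec:graph-calculus}.

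I will also sanity-check the sign on a single edge, taking $\mu_i=0$, $\mu_j=2$, $f_i=0$, $f_j=1$. The left-hand side is $(L_\mu f)_i = 1\cdot(0-1) = -1$. The right-hand side is $0 - \tfrac12\cdot 2\cdot 1 = -1$. The two agree, which confirms the minus sign.

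Finally, I will remark on the correspondence with the continuous case. Because $L$ plays the role of $-\Delta$, the identity mirrors \eqref{eq:cont-weighted-laplacian} up to the factor $\tfrac12$ convention: a diffusion term $\mu_i(Lf)_i$ plus a first-order, drift-like term built from $\nabla^{\mathcal G}\mu$ and $\nabla^{\mathcal G}f$.
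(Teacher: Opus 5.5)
Your proposal is correct and follows the paper's proof essentially line for line. Like the paper, you start from the entrywise form $\sum_{j\sim i}\frac{\mu_i+\mu_j}{2}(f(i)-f(j))$ given by Theorem~\ref{prop:graph_BE_closed_form}, split the weight as $\mu_i+\frac{\mu_j-\mu_i}{2}$, and identify the two resulting sums as $\mu_i(Lf)_i$ and $-\frac12\sum_{j\sim i}\nabla^G_{ij}\mu\,\nabla^G_{ij}f$. Your remarks on orientation invariance and the single-edge sign check are sound additions the paper omits; the check uses $\mu_i=0$, which lies outside the positivity assumption, but this is harmless because the identity is purely algebraic.
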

\begin{proof}
    See Appendix \ref{app:BE_graph_decomposition}.
\end{proof}
The first term on the right-hand side of Eq. \eqref{eq:graph-weighted-laplacian} is the usual graph diffusion scaled by the local value $\mu_i$. The second term weights each edge difference of the signal by the corresponding edge difference of $\mu$, so it appears only where the learned density and the signal vary. It is therefore a discrete first-order, drift-like correction induced by the variation of $\mu$. In this sense, it biases diffusion toward selected regions of the graph without introducing an explicit vector field and without modifying the adjacency. See Appendices \ref{app:Details-geometry} and \ref{appendix:explain} for details and an additional visualization of the learned density.
\subsection{Spectral properties of the $L_\mu$ Laplacian}
\label{subsec:PropBELap}
We now show that the weight function $\mu$ does more than locally reweight the graph: it reshapes the spectral decay profile of the propagation operator. 
% Throughout the experiments, we normalize $\mu$ to have unit mean,
% $
% \frac1n\sum_i \mu_i=1,
% $
% so that $L_\mu$ reflects a redistribution of diffusion strength across the graph rather than a global rescaling of the operator.

\begin{theorem}[Spectral comparison]
\label{thm:bounds}
Let $0=\lambda_0\le \lambda_1\le \cdots \le \lambda_{n-1}$ be the eigenvalues
of $L$, and let $\lambda_k^{\mu}$ denote the eigenvalues of $L_{\mu}$, ordered
similarly. For each $k\in\{0,\ldots,n-1\}$, there exist positive constants
$c_k^\mu \leq C_k^\mu$, defined in Appendix~\ref{app:spectral_bounds}, such that
\begin{equation}
\lambda_k\,\|\mu\|_1 c_k^\mu
\;\le\;\lambda_k^{\mu}\;\le\;\lambda_k\,\|\mu\|_1 C_k^\mu .
\end{equation}
\begin{comment}
where
\[
m_g =
\min_{\substack{g\in \mathrm{Span}(g_k,\ldots,g_{n-1})\\ \|g\|_2=1}}
\mathbb{E}_{\mu}[p_g],
\qquad
M_f =
\max_{\substack{f\in \mathrm{Span}(f_1,\ldots,f_k)\\ \|f\|_2=1}}
\mathbb{E}_{\mu}[p_f].
\]
\end{comment}
\end{theorem}

\begin{wrapfigure}{r}{0.32\textwidth}
    \vspace{-10pt}
    \centering
    \includegraphics[width=0.30\textwidth]{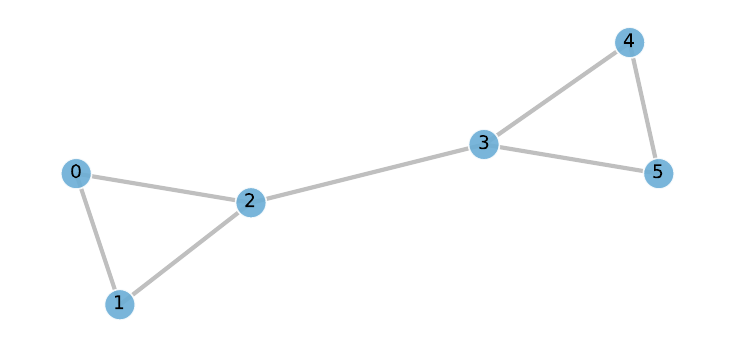}
    \caption{\small Minimal two-community graph.}
    \label{fig:simple-sbm-graph}
    %\vspace{-12pt}
\end{wrapfigure}
The constants $c_k^\mu$ and $C_k^\mu$ are not universal: they quantify how much $\mu$ emphasizes or suppresses the regions where the relevant spectral modes vary. 
Thus, Theorem~\ref{thm:bounds} should be read as a constrained spectral comparison rather than as free control of individual eigenvalues. 
Once the scale of $\mu$ is fixed, changing $\mu$ does not simply rescale diffusion time. 
Instead, it redistributes diffusion strength across the graph, changing the relative decay rates of spectral components, experimentally shown in Figure \ref{fig:SBM-heat-example} on the SBM.
%On bottlenecked graphs, this can preserve cross-bottleneck information long enough for it to propagate before representations inside dense regions collapse.

%We illustrate this effect empirically on barbell graph in Figure \ref{fig:Lmu-eigenvalue-barbell} and on stochastic block models in Figure ~\ref{fig:SBM-heat-example}.
\begin{comment}
\begin{corollary}
\label{cor:upper_bound_1}
Let $\mu$ be such that $\tilde{\mu}(1)=\tilde{\mu}(2)=0$ and
$\tilde{\mu}(k)=\frac{1}{n-2}$ for all $k\notin\{1,2\}$. Then
\[
\lambda_1^{\mu}\le \frac{1}{2(n-2)}\,\|\mu\|_1\,\lambda_1.
\]
\end{corollary}

Corollary~\ref{cor:upper_bound_1} shows that one can strongly reduce the
spectral gap by down-weighting only two nodes, thereby delaying oversmoothing without changing the graph structure.
\end{comment}
% \begin{corollary}
% \label{cor:lower_upper_bound}
%  Let $\tilde{\mu}:=\mu/\|\mu\|_1$ and $\|\mu\|_1=2$ satisfy $\tilde\mu(0)=1/2$, $\tilde\mu(1)=\tilde\mu(2)=0$, and $\tilde\mu(i)=\frac{1}{2(n-1)}+\frac{1}{(n-1)(n-3)}$ for $i\notin \{0,1,2\}$. Then we have
%  \begin{equation}
%  \frac{3}{2}\lambda_{n-1}\le \lambda_{n-1}^{\mu}
%  \end{equation}
%  and 
%  \begin{equation}
%  \lambda_1^{\mu}\le \frac{1}{2}\lambda_1.
%  \end{equation}
% \end{corollary}
% We experimentally validate Corollary \ref{cor:lower_upper_bound} in Section \ref{seq:experiments}.
\paragraph{Avoiding Spectral Redundancy on Bottlenecks}Finally, we illustrate the effect of $\mu$ on a minimal community graph. 
Consider two triangles connected by a single bridge edge, the smallest graph with two dense communities and a bottleneck. The standard Laplacian has spectrum $\{0,\;0.43,\;3,\;3,\;3,\;4.56\}$ where the repeated eigenvalue reflects the graph symmetry: several distinct modes are damped at the same rate under heat diffusion. 
The first non-zero eigenvector separates the two communities: it is nearly constant within each triangle and changes mainly across the bridge. This makes it the spectral component carrying the community-level information.
If we set $\mu=1.5$ on one community and $\mu=0.5$ on the other, the weighted Laplacian has spectrum $\{0,\;0.38,\;1.5,\;2.18,\;4.5,\;5.44\}$. Thus, without changing the edge set, $\mu$ breaks the symmetry of the diffusion operator and separates modes that were previously degenerate. 
The learned density changes the finite-time diffusion regime by making some modes persist longer and others decay faster.
We provide a detailed analysis of this phenomenon in Appendix \ref{app:simple-sbm-deepdive}.

\subsection{A novel class of spectral GNNs}
The explicit form of the weighted graph Laplacian presented in Theorem~\ref{prop:graph_BE_closed_form} matches the form of the combinatorial Laplacian (Eq. \eqref{eq:graph-laplacian}) and can therefore be used in any spectral GNN, with minimal modification to the architecture. 

\paragraph{Learnable Weight Function.} As discussed in Section~\ref{subsec:PropBELap}, the weight function $\mu$ allows control of the spectral properties of $L_\mu$, but it usually remains difficult to design it by hand in practice. We therefore propose a two-step strategy. First, we make a \emph{neural parameterization of $\mu$}. We choose a simple but sufficiently expressive GCN: 
\begin{equation}\label{eq:mu-GCN}
\mu=\textnormal{GCN}(A,X,\Theta),
\end{equation}
where $A$ is the adjacency matrix of $\mathcal G$, $X$ is a feature tensor, and $\Theta$ consists of learnable weights.

\textbf{$\mu$-ChebNet Filter.} Once $\mu$ is learned \eqref{eq:mu-GCN}, the second step is to construct $L_\mu$ using Eq. \eqref{eq:graph-weighted-laplacian}, and use it directly into a downstream spectral ChebNet. This leads to a new family of architectures, with varying parameterization of $\mu$ and downstream ChebNet that can be learned end-to-end to solve a given task. At training time, the weight function $\mu$ will be adapted so the downstream $\mu$-ChebNet solves the task and will thus depend on the task and distribution of training samples. As we shall see below, inspecting $\mu$ as a graph signal reveals much about the strategy employed by the network. 

 %After parameterizing the weight function \mu$ and thus learning the associated Laplacian $L_\mu$, we may incorporate it into an architecture that is particularly well suited to this setting, since its formulation depends only on the Laplacian operator: the ChebNet model \cite{defferrard2016convolutional}.
\textbf{$\mu$-Stable-ChebNet.}
A ChebNet variant that is particularly well suited to $L_\mu$ is the \emph{Stable-ChebNet} architecture \citep{hariri2026return}, which we also implement in the following experiments.
\begin{comment}
The node update rule for Stable-ChebNet is defined as follows:

\[
{X}^{(l+1)} = {X}^{(l)} + \epsilon
\left(
\sum_{k=0}^{K} T_k({L}) {X}^{(l)}
\left(
{W}_k - {W}_k^{\top} - \gamma {I}
\right)
\right)
\]
Here, ${I}$ denotes the identity matrix, $\gamma \in \mathbb{R}$ is a hyperparameter and $\epsilon \in \mathbb{R}^+$ represents the step size of the discretization.  We refer to Theorem~3 of \cite{hariri_return_2025} to show that the $\mu$-Stable ChebNet retains purely imaginary eigenvalues and therefore remains stable. 
\end{comment}

\section{Experiments}\label{seq:experiments}

We evaluate $\mu$-ChebNet and its stable variant on both synthetic and real-world datasets, to probe various aspects of graph reasoning. 
\paragraph{Oversquashing on Barbell Graphs.} The \emph{Barbell dataset} \citep{bamberger2024bundle} synthetically probes oversquashing by requiring nodes in each clique to predict features from the opposite clique, while all cross-clique information must pass through a single bridge edge, 
Thus, even when the message-passing depth is sufficient for reachability, standard MPNNs must compress many source signals through a narrow graph bottleneck, exposing their inability to transmit long-range information without severe distortion.
\paragraph{Results.} We observe in Table \ref{tab:barbell-N5070} that $\mu$-ChebNet significantly outperforms both ChebNet and Stable-ChebNet on barbell graphs of size $N\in\{50,70\}$. Even in the case of $K=9$ hops, where Stable-ChebNet struggles, $\mu$-ChebNet achieves almost zero  mean square error (MSE). For larger graphs of size $N=100$, we see in Table \ref{tab:barbelll-N100} that $\mu$-ChebNet lowers the MSE of Stable-ChebNet by an order of magnitude, while having less layers ($K=15$). Furthermore, we omitted $\mu$-Stable-ChebNet here, as results for $\mu$-ChebNet already demonstrate its improved performance, notably its consistency as graph size increases. 
%{Figure \ref{fig:Lmu-eigenvalue-barbell} \color{teal} supports the spectral re-timing interpretation from Section~\ref{subsec:PropBELap}. By reducing the displayed eigenvalues, $L_\mu$ slows the exponential decay of the corresponding heat-diffusion modes, allowing these components to persist longer before signal collapse.}
\begin{table}[h!]
\centering

% --- Top row: two tables side by side ---
\begin{minipage}[t]{0.47\textwidth}
\centering

\scalebox{0.86}{\begin{tabular}{lccc}
    \toprule
    \textbf{Method} & \textbf{K} & $\mathbf{50}$ & $\mathbf{70}$ \\
    \midrule
    \multirow{2}{*}{ChebNet}
    & $K = 9$  & $0.32 \pm 0.39$ & $1.08 \pm 0.05$ \\
    & $K = 10$ & 0.05 ± 0.00 & $1.08 \pm 0.01$ \\
    \midrule
    \multirow{2}{*}{Stable-ChebNet}
    & $K = 9$  & $0.17 \pm 0.11$ & $0.47 \pm 0.49$ \\
    & $K = 10$ & 0.05 ± 0.00 & 0.06 ± 0.03 \\
    \midrule
    \multirow{2}{*}{\textbf{$\mu$-ChebNet (ours)}}
    & $K = 9$  & \textbf{0.03 ± 0.01} & \textbf{0.02 ± 0.02} \\
    & $K = 10$ & \textbf{0.03 ± 0.01} & \textbf{0.02 ± 0.02}  \\
%    \hspace{0.5em}
%    \multirow{2}{*}{\textbf{\color{teal}+ Stability}}
%    & $K = 9$  & \textbf{?? ± ??} & \textbf{?? ± ??} \\
%    & $K = 10$ & \textbf{?? ± ??} & \textbf{?? ± ??}  \\
    \bottomrule
    
    \end{tabular}}
    \vspace{5pt}
\caption{Mean squared error (MSE) of different ChebNet variants on the oversquashing experiment for barbell graphs of sizes $N = 50,70$.}
\label{tab:barbell-N5070}
\end{minipage}
\hfill
\begin{minipage}[t]{0.47\textwidth}
\centering

\scalebox{0.88}{\begin{tabular}{lcc}
        \toprule
        \textbf{Method} & $\mathbf{K}$ & $\mathbf{100}$ \\
        \midrule
        ChebNet & $K = 20$ & $0.87 \pm 0.05$ \\
        \midrule
        Stable-ChebNet & $K = 20$ & 0.21 ± 0.27 \\
        \midrule
        \textbf{$\mu$-ChebNet (ours)} & $K = 15$ & \textbf{0.02 ± 0.01} \\
%        \textbf{\color{teal}+ Stability}
%        & $K = ??$  & \textbf{?? ± ??}  \\
        \bottomrule
        \end{tabular}}
        \vspace{19pt}
\caption{Mean squared error (MSE) of different ChebNet variants on the oversquashing experiment for barbell graphs of size $N = 100$}
\label{tab:barbelll-N100}
\end{minipage}

%\vspace{1.5em}

% --- Bottom row: horizontal figure ---
% \includegraphics[width=0.95\textwidth]{figures/eigs2.pdf}

% \caption{\color{teal}^$L_\mu$ reduces the displayed eigenvalues relative to the standard Laplacian, slowing the decay of the corresponding heat-diffusion modes and delaying signal collapse.}
% \label{fig:Lmu-eigenvalue-barbell}

\end{table}

\paragraph{Graph Property Prediction.} The \emph{Graph Property Prediction} dataset \citep{corso2020principal} evaluates whether a model can infer global structural quantities of a graph, such as diameter, single-source shortest-path (SSSP) distances, and eccentricity. These tasks test whether a GNN can propagate and integrate information beyond local neighborhoods, exposing limitations in long-range reasoning and structural message diffusion.
 \paragraph{Results.} We observe in Table \ref{tab:graphprop} that $\mu$-ChebNet systematically improves upon the regular ChebNet formulation, and $\mu$-Stable-ChebNet also improves upon Stable-ChebNet, which itself was demonstrated to strongly outperform existing architectures \citep{hariri2026return}. We note that while $\mu$-Stable-ChebNet reports a slightly higher mean than Stable-ChebNet for eccentricity, it remains strongly better than all other models, and lies within Stable-ChebNet's error bars. 

\begin{table}[h!]
\centering
\begin{tabular}{lccc}
\toprule
\textbf{Model} & \textbf{Diameter $\downarrow$} & \textbf{SSSP} $\downarrow$ & \textbf{Eccentricity} $\downarrow$ \\
\midrule
GCN                 & $0.7424 \pm 0.0466$ & $0.9499 \pm 0.0001$ & $0.8468 \pm 0.0028$ \\
GAT                 & $0.8221 \pm 0.0752$ & $0.6951 \pm 0.1499$ & $0.7909 \pm 0.0222$ \\
GraphSAGE           & $0.8645 \pm 0.0401$ & $0.2863 \pm 0.1843$ & $0.7863 \pm 0.0207$ \\
GIN                 & $0.6131 \pm 0.0990$ & $-0.5408 \pm 0.4193$ & $0.9504 \pm 0.0007$ \\
GCNII               & $0.5287 \pm 0.0570$ & $-1.1329 \pm 0.0135$ & $0.7640 \pm 0.0355$ \\
DGC                 & $0.6028 \pm 0.0050$ & $0.1483 \pm 0.0231$  & $0.8261 \pm 0.0032$ \\
GRAND               & $0.6715 \pm 0.0490$ & $-0.0942 \pm 0.3897$ & $0.6602 \pm 0.1393$ \\
A-DGN w/ GCN backbone & $0.2271 \pm 0.0804$ & $-1.8288 \pm 0.0607$ & $0.7177 \pm 0.0345$ \\
\midrule
ChebNet & $-0.1517 \pm 0.0343$ & $-1.8519 \pm 0.0539$ & $-1.2151 \pm 0.0852$ \\

Stable-ChebNet
& $-0.2477 \pm 0.0526$ 
& $-2.2111  \pm 0.0160$ 
& $\mathbf{-2.1043 \pm 0.0766}$ \\

\textbf{$\mu$-ChebNet (ours)} 
& $-0.2075 \pm  0.0634$
& $-2.3149 \pm 0.0301$ 
& $-1.8740 \pm 0.0597$ \\

\quad\textbf{+ Stability}
& $\mathbf{-0.3179 \pm 0.0182}$ 
& $\mathbf{-2.3217 \pm 0.0330}$ 
& $\mathbf{-2.0338 \pm 0.0571}$ \\

\bottomrule
\end{tabular}
\vspace{5pt}
\caption{Mean test set $\log_{10}(\mathrm{MSE})$ and standard deviation averaged over 4 random weight initializations for each configuration on the Graph Property Prediction dataset. The lower the better.}
\label{tab:graphprop}
\end{table}
%We also test $\mu$-ChebNet on the \emph{Graph Property Prediction} benchmark \citep{corso2020principal}, which evaluates both graph-level and node-level classification. The goal is to predict structural quantities such as diameter, single-source shortest-path (SSSP), and eccentricity from graph topology and features by aggregating information over large receptive fields. 

\paragraph{City-Networks Benchmark.} We now evaluate our model on the City-Networks dataset \citep{liang2025towards}, a real-world large-scale node classification benchmark derived from city road networks, with graphs built from OpenStreetMap road junctions and segments and labels based on eccentricity/accessibility. The task consists of node classification of city road junctions, requiring information from distant nodes in large-diameter graphs, thereby directly testing real-world long-range reasoning. %The City-Networks results in Table \ref{tab:city_networks} are not meant to reproduce the original paper’s experimental setup or reported numbers. Instead, we use the dataset to fairly compare ChebNet, Stable-ChebNet, and $\mu$-ChebNet under the same high-receptive-field setting. In particular, we use larger Chebyshev orders K to study long-range propagation behavior that was not the focus of the original paper. The goal is therefore to isolate the effect of using a learned task-adaptive Laplacian.

\paragraph{Results.} We report prediction accuracy (\%) results for four different cities (London, Paris, Shanghai, Los Angeles) in Table \ref{tab:city_networks}, and observe that $\mu$-ChebNet and its stable variant either outperforms or is within error range of Stable-ChebNet. 

\begin{table}[t]
\centering
\begin{tabular}{lcccc}
\toprule
\textbf{Method} &\textbf{London} & \textbf{Paris} & \textbf{Shanghai} & \textbf{Los Angeles} \\
\midrule
ChebNet 
& 39.10 $\pm$ 1.79 
& 34.34 $\pm$ 0.31 
& 40.00 $\pm$ 0.39 
& 41.31 $\pm$ 0.48 \\

Stable-ChebNet 
& 40.11 $\pm$ 0.20
& \textbf{38.11 $\pm$ 0.25} 
& 41.45 $\pm$ 0.27 
& 42.17 $\pm$ 0.16 \\

\textbf{$\mu$-ChebNet (ours) }
& \textbf{41.88 $\pm$ 0.11}
& 35.14 $\pm$ 0.12 
& 41.98 $\pm$ 0.96 
& \textbf{43.33 $\pm$ 0.33} \\

\quad \textbf{+ Stability }
& 41.44 $\pm$ 0.07 
& 37.49 $\pm$ 0.26 
& \textbf{42.63 $\pm$ 0.70} 
& 43.10 $\pm$ 0.60 \\
\bottomrule
\end{tabular}
\vspace{5pt}
\caption{Accuracy (\%) on City-Networks dataset. We use this dataset to fairly compare ChebNet architectures by setting them to the same receptive field $K=10$ and isolating the effect of the weighted Laplacian $L_\mu$. %(mean $\pm$ std over seeds 0--2).
}
\label{tab:city_networks}
\vspace{-20pt}
\end{table}

\paragraph{Open-Graph Benchmark: Proteins.} The \texttt{ogbn-proteins} dataset \citep{ogb} is a biologically meaningful benchmark, with multi-label protein function prediction tasks on a protein-protein interaction network.

\paragraph{Results.}
 We observe in Table~\ref{tab:ogbn-proteins} that $\mu$-ChebNet reaches a score of $79.36\%$, outperforming the MPNN baselines \citep{kipf2016semi, velivckovic2018graph, hamilton2017inductive, xu2019powerful}, and remaining competitive with both Stable-ChebNet \citep{hariri2026return} and Graph Transformer architectures \citep{bo2023specformer, wu2022nodeformer, wu2023sgformer}. 
Although SPEXFORMER obtains the best score among the reported models, our method achieves comparable performance while relying only on ordinary sparse graph operations. Computationally, $\mu$-ChebNet incurs the cost of a shallow GCN used to learn $\mu$, an $O(m)$ construction of $L_\mu$, and a downstream Chebyshev filter whose propagation cost scales as $O(Km)$, up to feature-channel mixing. Hence, on sparse graphs with fixed $K$ and bounded width, the overall complexity is linear in the number of nodes. By contrast, SPEXFORMER reduces the final attention layer to $O(nd^2 + ndc)$ by selecting $c$ attention edges per node, but requires an additional attention-estimation stage. Thus, while both approaches can scale linearly on sparse graphs, our approach obtains task-adaptive propagation through a learned sparse Laplacian and avoids a two-stage attention-sparsification pipeline.
\begin{wraptable}{r}{5.5cm}
\vspace{5pt}
\begin{tabular}{l c}
    \toprule
    \textbf{Model} & \textbf{{ROC-AUC} $\uparrow$} \\
    \midrule
    MLP            & $72.04 \pm 0.48$ \\
    GCN            & $72.51 \pm 0.35$ \\
    ChebNet        & $77.55 \pm 0.43$ \\
    Stable-ChebNet & $79.55 \pm 0.34$ \\
    SGC            & $70.31 \pm 0.23$ \\
    GCN-NSAMPLER   & $73.51 \pm 1.31$ \\
    GAT-NSAMPLER   & $74.63 \pm 1.24$ \\
    SIGN           & $71.24 \pm 0.46$ \\
    NodeFormer     & $77.45 \pm 1.15$ \\
    SGFormer       & $79.53 \pm 0.38$ \\
    \textbf{SPEXPFORMER} & $\mathbf{80.65 \pm 0.07}$ \\
    \midrule
    $\mu$-ChebNet (ours)  & $79.36 \pm 0.41$ \\
   % \hspace{0.5em}{\color{teal}+Stability} & $?? \pm ??$ \\
    \bottomrule
    \end{tabular}
    \caption{Results on \texttt{ogbn-proteins}.}
    \label{tab:ogbn-proteins}
\vspace{-35pt}
\end{wraptable}

\paragraph{Discussion and Future Work.}
% Rather than treating the graph Laplacian as fixed, one can learn the propagation geometry itself. By replacing $L$ with $L_\mu$, $\mu$-ChebNet keeps the efficiency of polynomial spectral filtering while locally modulating information flow on the original graph, yielding preferred propagation routes without attention, rewiring, or explicit edge addition. This suggests that spectral GNNs can recover a form of task-dependent local propagation while retaining the efficiency of polynomial filtering.
\textcolor{black}{Across the experiments, $\mu$-ChebNet is most effective on long-range and bottlenecked tasks, where learning $L_\mu$ improves over fixed-Laplacian ChebNet \& Stable-ChebNet baselines. On real-world benchmarks, it rivals larger architectures such as graph transformers, while remaining a lightweight an end-to-end model. The stability constraint \citep{gravina_adgn, gravina_swan, hariri2026return} yields mixed gains, improving some benchmarks while leaving others unchanged or slightly worse. This suggests a nontrivial interaction with the learned density $\mu$, whose theoretical and empirical role remains open.}

We believe that further important future research lies in deriving tighter eigenvalue bounds for $L_\mu$, evaluating $L_\mu$ with other downstream spectral architectures besides ChebNet-style filters, and ablating over alternative parameterizations of $\mu$ beyond the GCN considered in this work. 

\section{Conclusion}
In this work, we achieve guided diffusion dynamics on graphs by learning a data-driven weighting $\mu$ of the graph nodes, resulting in a task-adaptive graph Laplacian $L_\mu=D_\mu-A_\mu$ (Theorem \ref{prop:graph_BE_closed_form}). This produces an end-to-end rewiring-like effect, without changing the graph topology. 
% We introduced $\mu$-ChebNet, a spectral GNN that learns a weighted Laplacian $L_\mu$ instead of relying on the fixed graph Laplacian. The learned density $\mu$ induces a biased propagation geometry: it changes how strongly diffusion uses existing edges, producing a rewiring-like effect without altering the graph topology.
We prove that this learnable weight function $\mu$ allows task-dependent control over the graph's spectrum by changing the relative decay rates of graph Fourier modes (Theorem \ref{thm:bounds}). Moreover, we introduce $\mu$-ChebNet, an upgraded ChebNet architecture incorporating the modulable graph Laplacian $L_\mu$. 
We experimentally observe that $\mu$-ChebNet drastically outperforms current ChebNet architectures on signal propagation through narrow bottlenecks (Barbell Graph). On real-world datasets, $\mu$-ChebNet also performs comparably to state-of-the-art architectures. Our work opens doors to further exploring the role of task-adaptive geometry in machine learning, offering better interpretability at a lower cost.

\paragraph{Impact Statement.} This work advances the theoretical foundations of graph machine learning by introducing a principled operator that proposes data-driven inhomogeneous diffusion on graphs. While our approach is primarily methodological, future work should consider fairness and bias implications when learning weight functions on graphs.
% \paragraph{Code Availability.} The full source code will be made available upon acceptance of this manuscript.
\newpage

\newpage
\bibliography{muChebNet}
\bibliographystyle{plainnat}
\newpage

%%%%%%%%%%%%%%%%%%%%%%%%%%%%%%%%%%%%%%%%%%%%%%%%%%%%%%%%%%%%

\appendix
\section{Illustrating enhanced explainability}
\label{appendix:explain}
We provide further evidence that the learned weight function $\mu$ helps revealing the strategy used by the network to solve tasks. We consider a dataset of ring graphs and a task that consists in predicting the correct class (among $10$) of a query node. This class is encoded as a Dirac in the features of a distant answer node. The query and answer nodes are symmetrically positioned so that there are two long paths of equal lengths connecting them. The features along the first path are all-zero, but the features along the second are i.i.d Gaussian noise. Clearly, agglomerating information along that second path results in accumulating noise and therefore poor decision making.  As can be seen on Figure~\ref{fig:RingTest}, $\mu$ assigns higher values to vertices on the most efficient path from the answer node to the query node — effectively steering signal propagation along meaningful routes, without altering the original graph topology. Specifically, $\mu$ successfully suppresses one of the two paths, concentrating signal flow along a single route. Beyond improving predictive performance, the learned weight functions offer transparent, interpretable insights into how the model routes information across complex structures.

\begin{figure}[h!]
    \centering
       \includegraphics[width= 15 cm, height=5.1cm]{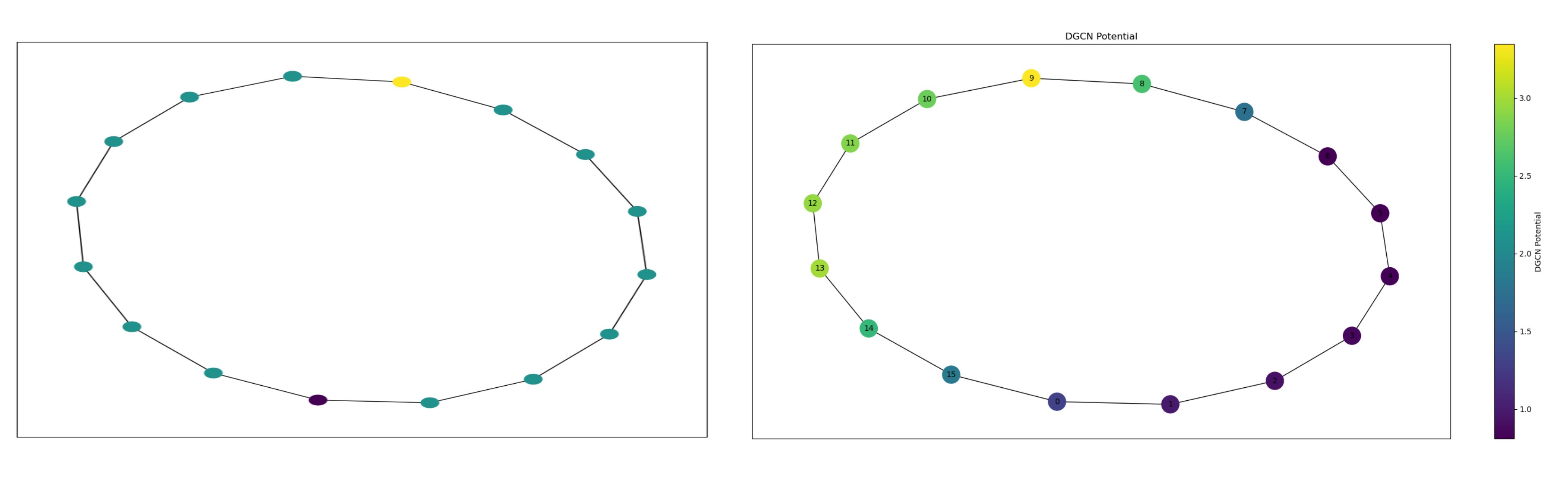}
    \caption{\label{fig:RingTest}Ring graph configuration. (Left) The query node is in purple and the answer node in yellow. Nodes on the rightward path contain noisy features. (Right) The resulting learned weight function has broken the symmetry of the configuration, routing information on the most informative path.}
\end{figure}
\section{Analysis Community Graph}\label{app:simple-sbm-deepdive}
\paragraph{Avoiding Repeated Eigenvalues}
We consider the graph formed by two triangles connected by a single bridge edge. 
This graph is a minimal model of a bottlenecked community structure: each triangle is internally dense, while all inter-community communication must pass through one edge.
For the standard combinatorial Laplacian, the spectrum is
\[
\{0,\;0.438,\;3,\;3,\;3,\;4.562\}.
\]
The repeated eigenvalue at $3$ is caused by the symmetry of the two communities. 
Under heat diffusion, all modes associated with this repeated eigenvalue decay at the same rate. 
The Fiedler eigenvector, associated with the first non-zero eigenvalue, is aligned with the partition between the two triangles and has a large edge difference on the bridge.
We then choose a piecewise constant density $\mu$, with $\mu=1.5$ on one community and $\mu=0.5$ on the other, to yield $\sum_i\mu_i=6=\sum_{i=1}^61$ as for the unweighted case.  
The spectrum of $L_\mu$ becomes
\[
\{0,\;0.38,\;1.5,\;2.18,\;4.5,\;5.44\}.
\]
The weighted operator removes the eigenvalue degeneracy and modifies the eigenvectors. 
Modes whose variations lie mostly in the high-$\mu$ region become more costly and are shifted toward larger eigenvalues, while modes varying mostly in the low-$\mu$ region remain comparatively slower.
Normalizing by the Fiedler eigenvalue gives the relative decay rates
\[
L:
\{0,\;1,\;6.84,\;6.84,\;6.84,\;10.40\},
\]
whereas
\[
L_\mu:
\{0,\;1,\;3.17,\;5.70,\;15.87,\;18.69\}.
\]
This shows that $L_\mu$ does not simply accelerate or slow all modes uniformly. 
Instead, it changes the relative timing of spectral decay: some components become slower relative to the partition mode, while others are damped much faster, as shown in Figure \ref{fig:sbm-deepdive-fourpanels}.
This example highlights the transient nature of the mechanism. 
Both $L$ and $L_\mu$ eventually drive heat diffusion toward the constant eigenspace, so neither operator prevents collapse at infinite time. 
The difference lies in the finite-time regime relevant for GNNs: $L_\mu$ changes which modes remain available before collapse occurs. 
In this sense, the learned density steers propagation by re-timing spectral decay rather than by inducing faster-than-diffusive transport.

\begin{figure*}[t]
    \centering

    \begin{subfigure}[t]{0.48\textwidth}
        \centering
        \includegraphics[width=\linewidth]{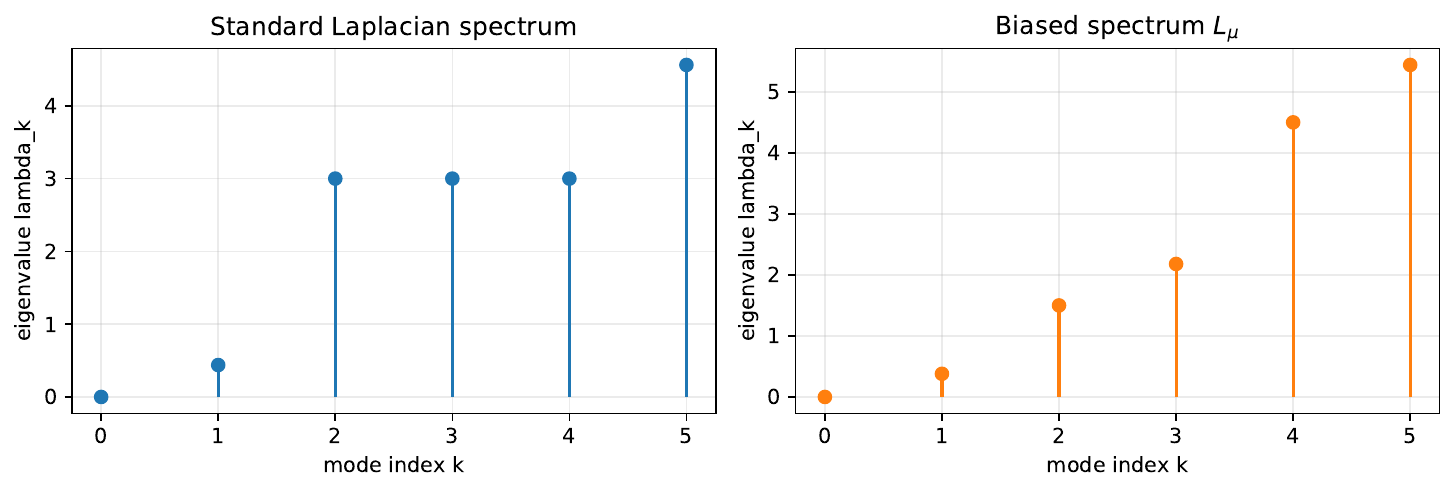}
        \caption{Eigenvalue spectra of the standard Laplacian \(L\) and biased Laplacian \(L_\mu\). The repeated eigenvalue in \(L\) is split under \(L_\mu\), showing symmetry breaking.}
        \label{fig:sbm-graph}
    \end{subfigure}
    \hfill
    \begin{subfigure}[t]{0.48\textwidth}
        \centering
        \includegraphics[width=\linewidth]{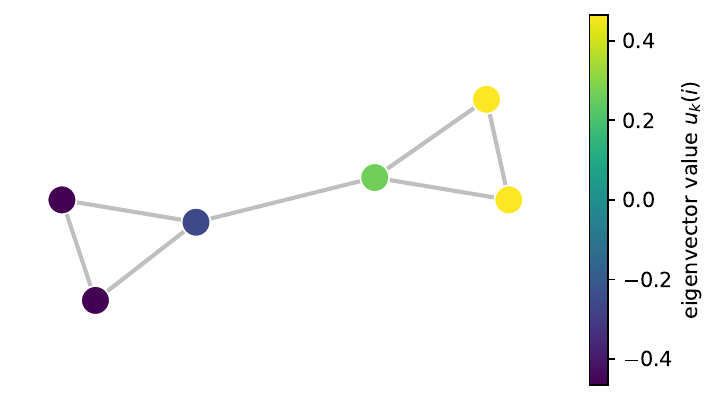}
        \caption{Fiedler mode (\(\lambda_1\)): aligned with the community partition.}
        \label{fig:sbm-fiedler}
    \end{subfigure}

    \vspace{0.6em}

    \begin{subfigure}[t]{0.48\textwidth}
        \centering
        \includegraphics[width=\linewidth]{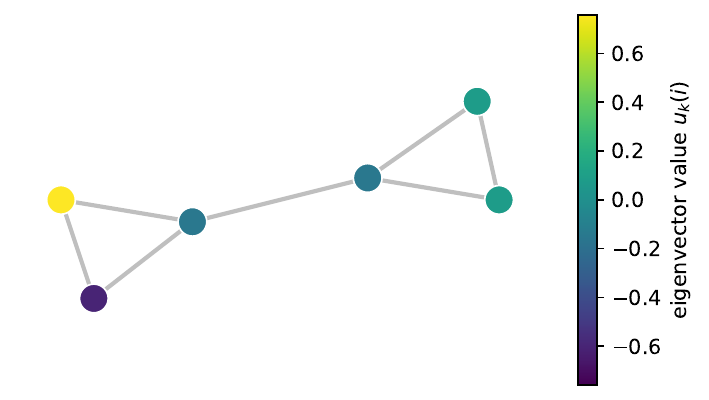}
        \caption{Representative middle-band mode (degenerate band of \(L\)).}
        \label{fig:sbm-mid}
    \end{subfigure}
    \hfill
    \begin{subfigure}[t]{0.48\textwidth}
        \centering
        \includegraphics[width=\linewidth]{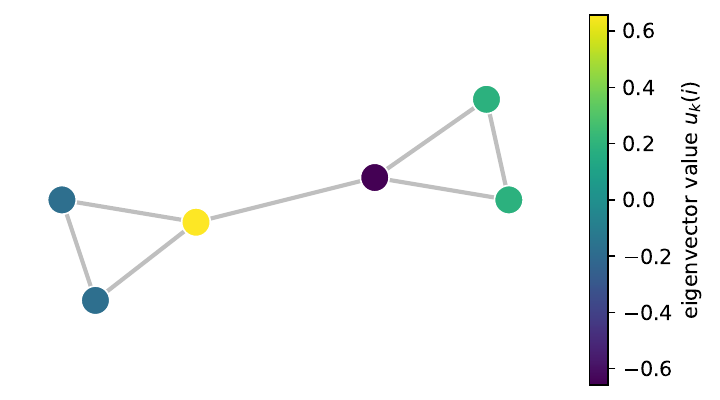}
        \caption{High-frequency mode.}
        \label{fig:sbm-high}
    \end{subfigure}

    \caption{Spectral analysis on the simple community graph.}
    \label{fig:sbm-deepdive-fourpanels}
\end{figure*}
\paragraph{Oversquashing Mitigation via Learned Diffusion Geometry}
\begin{figure*}
    \centering\includegraphics[width=0.5\linewidth]{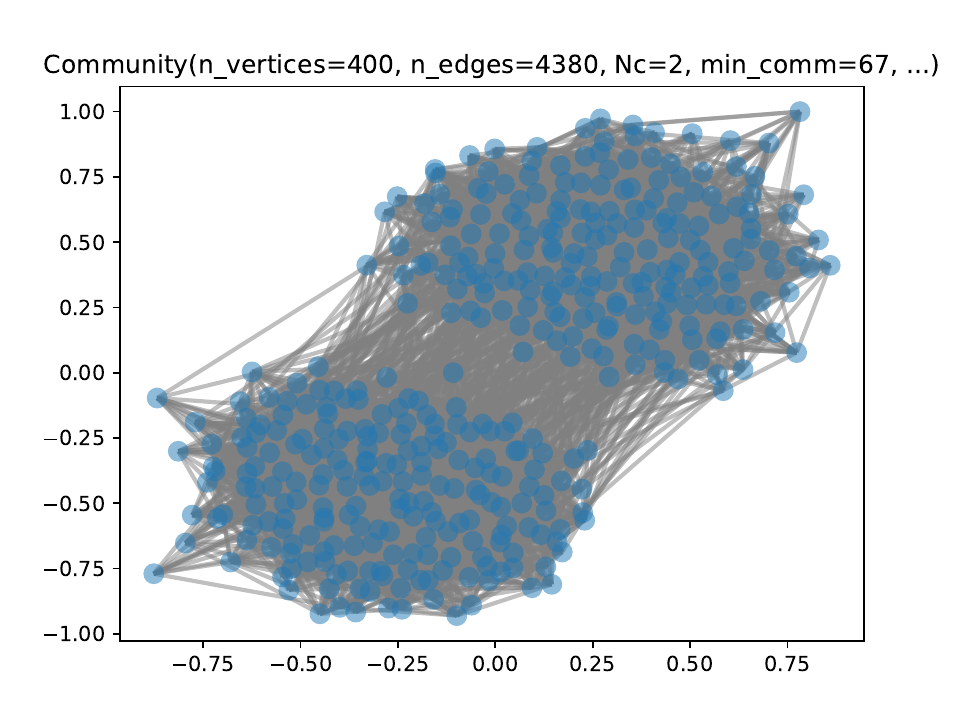}
\caption{2-community Graph}
\label{fig:large-graph}
\end{figure*}
To show how the weighted Laplacian $L_\mu$ helps mitigate oversquashing, we perform the following experiment:
We generate a 2-community graph $C_1$ and $C_2$ with 400 nodes and a strong bottleneck as shown in Figure \ref{fig:large-graph}.
\begin{figure*}
    \centering\includegraphics[width=\linewidth]{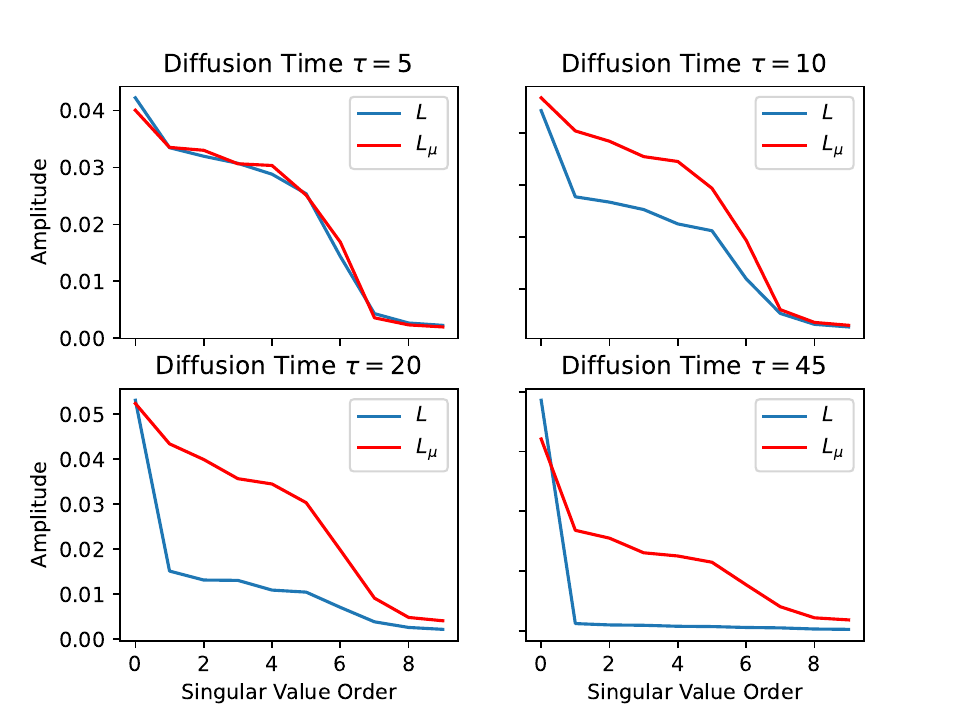}
\caption{Singular values of the propagated feature matrix restricted to the target cluster. While standard diffusion rapidly collapses all but one direction, the learned weighted diffusion geometry opens a transient regime in which multiple independent directions survive long enough to traverse the bottleneck.}
\label{fig:sing-val}
\end{figure*}
We choose $\mu=c_1$ on community $C_1$ and $\mu=c_2$ on community $C_1$, where $c_1, c_2$ are two positive constants. 
Further, we initiate a feature matrix $X_0$ consisting of $k$ orthogonal vectors localized on one community. We then propagate using the heat kernels $e^{\tau L}$ and $e^{\tau L_\mu}$ for the weighted Laplacian applied to $X_0$. As propagation time $\tau$ grows, the signals will transit through the bottleneck to reach the second community. Since the bottleneck is strong, it will limit the flow of information leading to oversquashing. To measure this phenomenon, we track the singular values of the signal restricted to the second community. Under standard diffusion, the effective rank collapses rapidly, indicating severe information compression. In contrast, the weighted Laplacian opens a clear transient regime in which several singular values remain significant, allowing multiple independent directions to traverse the bottleneck before collapse. This demonstrates that the biased diffusion geometry alleviates oversquashing by reallocating spectral mass away from local oscillations and toward bottleneck‑aligned transport, as illustrated in Figure \ref{fig:sing-val}.

\section{Proofs}
\subsection{Proof of Proposition \ref{prop:grad-div-adjoint}}
Taking the definitions of $\nabla^G$ and ${\nabla^G}^\star$ from Eqs. \eqref{eq:graph-grad} and \eqref{eq:graph-grad-adj}, we obtain 
\begin{equation}
\llangle \nabla^\mathcal G f,F\rrangle
=\sum_{e=(i,j)\in E}(f(j)-f(i))F(e)
=\sum_{i\in V}f(i)\left(\sum_{e=(j,i)}F(e)-\sum_{e=(i,j)}F(e)\right)
=\langle f,{\nabla^\mathcal G}^\star F\rangle.
\end{equation}
proving the claim.
\begin{flushright}
    \qed
\end{flushright}
\subsection{Proof of Proposition \ref{prop:graph-laplacian}}\label{app:proof-laplacian-d-a}
For any node $i$,
\[({\nabla^\mathcal G}^*\nabla^\mathcal G f)(i)=\sum_{e=(j,i)}(\nabla^\mathcal G f)(e)-\sum_{e=(i,j)}(\nabla^\mathcal G f)(e).\]

If $e=(j,i)$, then $(\nabla^\mathcal G f)(e)=f(i)-f(j)$. If $e=(i,j)$, then $(\nabla^\mathcal G f)(e)=f(j)-f(i)$, so the second term contributes $f(i)-f(j)$. Hence
\[({\nabla^\mathcal G}^*\nabla^\mathcal G f)(i)=\sum_{j\sim i}(f(i)-f(j))=(D-A)f(i).\]
Therefore ${\nabla^\mathcal G}^*\nabla^\mathcal G=D-A$. The divergence identity follows from $\operatorname{div}^{\mathcal G}:=-{\nabla^\mathcal G}^*$.
\begin{flushright}
    \qed
\end{flushright}

\subsection{Proof of Theorem \ref{prop:graph_BE_closed_form}}
\label{app:BE_graph_derivation}
We first consider a non-negative node-wise weight $\mu: V \rightarrow \mathbb{R}^+$ and the corresponding Dirichlet bilinear form: $\mathcal{D}^G_\mu (f,g) =  \frac{1}{2}\sum_i \mu_i \sum_{j\sim i}\nabla^G_{ij} f \nabla^G_{ij} g.$

% The symmetry with~\eqref{eq:Dirichlet_mu} reveals that the Dirichlet form is a weighted scalar product of the components of the graph gradients of $f$ and $g$ at each node. Again, there exists a unique symmetric positive-definite matrix $L_\mu$ such that $\mathcal{D}_\mu^G (f,g) = f^T L_\mu g= (L_\mu f)^T g$. 
Developing $\mathcal{D}_\mu^G$ we get:

\begin{align*}
\mathcal{D}^G_\mu (f,g) &=  \frac{1}{2}\sum_i \mu_i \sum_{j\sim i}\nabla^G_{ij} f \nabla^G_{ij} g\\
& = \frac{1}{2}\sum_i \mu_i \sum_{j\sim i}
   (g(i)-g(j))(f(i)-f(j)) \\
   & = \frac{1}{2}\sum_i \sum_{j\sim i}\frac{\mu_i S_{ij}+\mu_j S_{ji}}{2} \\
   &= \frac{1}{2}\sum_i \sum_{j\sim i}\frac{S_{ij}(\mu_i +\mu_j)}{2} \\
   & = \sum_{i\sim j}\frac{\mu_i+\mu_j}{2}
        \big( f(i)-f(j) \big)\big( g(i)-g(j) \big)\\%sum over all the edges, so no edges counted twice, so I get rid of factor 1/2, but
   & = \frac{1}{2}\sum_{i} \sum_{j} A_{\mu,ij}
        \big( f(i)-f(j) \big)\big( g(i)-g(j) \big) \\ % because A selects only neighbors, but we need 1/2 to account for (i,j) and (j,i)
        & = \frac{1}{2}\sum_{i} \sum_{j} g(i)A_{\mu,ij}
        \big( f(i)-f(j) \big)-g(j)A_{\mu,ij}
        \big( f(i)-f(j) \big)  \\
        & = \frac{1}{2}\sum_{i} \sum_{j} g(i)A_{\mu,ij}
        \big( f(i)-f(j) \big)-g(i)A_{\mu,ji}
        \big( f(j)-f(i) \big)   \\
        & = \sum_{i} \sum_{j} g(i)\, A_{\mu,ij}\,(f(i)-f(j)) \\
        & = \sum_{i} g(i) \left( D_{\mu,ii} f(i) - \sum_{j\sim i} A_{\mu,ij} f(j) \right) 
\end{align*}

where we have defined $A_\mu = M_\mu \odot A$, with $\odot$ being the Hadamard product and $M_{\mu,ij} = \frac{1}{2}(\mu_i + \mu_j)$. On line 3, we have defined $S_{ij}=\big( f(i)-f(j) \big)\big( g(i)-g(j) \big)$. On line 4, we use the symmetry property of the matrix $S$. A similar reasoning has been used on line 8, since $A_{\mu, {ij}}$ is also symmetric. The weighted adjacency matrix $A_\mu$ preserves the original adjacency and the corresponding degree matrix is defined as usual $D_\mu = \textrm{diag} \bigl( A_\mu \mathbf{1} \bigr)$. Finally, we immediately see that:
\begin{equation}
    D^G_\mu (f,g) = (L_\mu f)^T g \Rightarrow L_\mu = D_\mu - A_\mu
\end{equation}
Which allows us to just simply compute $A_\mu$ and $D_\mu$. 
\begin{flushright}
    \qed
\end{flushright}\subsection{Proof of Theorem \ref{prop:graph_BE_decomposition}}
\label{app:BE_graph_decomposition}
We derive the decomposition of $L_\mu$:
\begin{align*}
(L_\mu f)_i&=\sum_{j\sim i}\frac{\mu_i+\mu_j}{2}(f(i)-f(j))\\
&=\sum_{j\sim i}\left(\mu_i+\frac{\mu_j-\mu_i}{2}\right)(f(i)-f(j))\\
&=\mu_i\sum_{j\sim i}(f(i)-f(j))+\frac12\sum_{j\sim i}(\mu_j-\mu_i)(f(i)-f(j))\\
&=\mu_i(Lf)(i)-\frac12\sum_{j\sim i}\nabla^G_{ij}\mu\,\nabla^G_{ij}f.   
% (L_\mu f)_i&= \sum_{j \sim i}\big(\mu_i-\frac{\mu_i-\mu_j}{2}\big)(f(j)-f(i)) \\ \nonumber
    % &= \mu_i\sum_{j \sim i}\nabla^G_{ij}f+\frac{1}{2}\sum_{j \sim i} \nabla^G_{ij}\mu \nabla^G_{ij}f\\ \nonumber
    % &=\mu_i L f(i) +\frac{1}{2}\sum_{j \sim i} \nabla^G_{ij}\mu \nabla^G_{ij}f
\end{align*}
\begin{flushright}
    \qed
\end{flushright}
\subsection{Proof of Theorem \ref{thm:bounds}}
\label{app:spectral_bounds}
Since $L_\mu$ preserves the graph topology, its effect comes from changing the contribution of local differences $f(u)-f(v)$ (where $u$ and $v$ are two neighboring nodes) to the Dirichlet energy. This section makes this precise through Rayleigh quotients, derives bounds comparing the spectra of $L$ and $L_\mu$, and illustrates how $\mu$ can modulate the spectral gap, the spectral radius, and repeated eigenvalues.
First, we compare $L$ and $L_\mu$ through their Rayleigh quotients, $\mathcal{R}(f)=\frac{f^{\top}Lf}{f^{\top}f}$,
and
$\mathcal{R}_{\mu}(f)=\frac{f^{\top}L_{\mu}f}{f^{\top}f}.$
For $f\in\mathbb{R}^n$, define the normalized local variation profile

$N(f)=\left(\frac{\|\nabla^G f(u)\|_2^2}{f^{\top}f}\right)_{u\in V}$, where
$\|\nabla^G f(u)\|_2^2=\sum_{v\sim u}\bigl(f(u)-f(v)\bigr)^2$,
and the associated probability distribution
$p_f=\frac{1}{\mathbf{1}^{\top}N(f)}\,N(f)$.

In order to complete the proof, let us first state a helpful lemma:
\begin{lemma}[Rayleigh quotient factorization]
\label{lem:Rmu}
Let $f\in\mathbb{R}^n$. Then
\[
\mathcal{R}_{\mu}(f)
=
\|\mu\|_1\,\mathbb{E}_{\mu}[p_f]\,\mathcal{R}(f),
\]
where $\mathbb{E}_{\mu}[p_f]$ is the expectation of $p_f$ under $\mu$
normalized to be a probability distribution.
\end{lemma}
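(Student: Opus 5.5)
The plan is to compute both Rayleigh quotients directly from Dirichlet forms, using Theorem~\ref{prop:graph_BE_closed_form}, and check that the right-hand side collapses to the same expression. First I will fix the meaning of the expectation. Write $\tilde\mu:=\mu/\|\mu\|_1$, a probability vector on $V$ since $\mu>0$. The natural reading of the statement is
\[
\mathbb{E}_{\mu}[p_f]=\sum_{u\in V}\tilde\mu_u\,p_f(u).
\]
Multiplying by $\|\mu\|_1$ removes the normalization, and the right-hand side becomes
\[
\Bigl(\sum_u \mu_u\, p_f(u)\Bigr)\mathcal R(f)
=\frac{\sum_u \mu_u\|\nabla^G f(u)\|_2^2}{\sum_u\|\nabla^G f(u)\|_2^2}\cdot\frac{f^\top L f}{f^\top f}.
\]
This uses only the definitions of $N(f)$ and $p_f$; the common factor $1/(f^\top f)$ in $N(f)$ cancels in the normalization.

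Second, I will express the numerator of $\mathcal R_\mu$ through the local variations. By Theorem~\ref{prop:graph_BE_closed_form}, $f^\top L_\mu f=\mathcal D^{\mathcal G}_\mu(f,f)$. The definition \eqref{eq:graph-dirichlet-form} of this Dirichlet form gives directly
\[
f^\top L_\mu f=\tfrac12\sum_u \mu_u\sum_{v\sim u}(f(u)-f(v))^2=\tfrac12\sum_u\mu_u\|\nabla^G f(u)\|_2^2.
\]
Specializing to $\mu\equiv 1$, where $L_\mu=L$ by the remark after Theorem~\ref{prop:graph_BE_closed_form}, yields $\sum_u\|\nabla^G f(u)\|_2^2=2f^\top Lf$. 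This reflects that each edge is counted once from each endpoint.

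Substituting into the displayed right-hand side, the factor $\sum_u\|\nabla^G f(u)\|_2^2=2f^\top Lf$ cancels against $f^\top Lf$ in $\mathcal R(f)$. What remains is $\frac{1}{2}\sum_u\mu_u\|\nabla^G f(u)\|_2^2/(f^\top f)=f^\top L_\mu f/(f^\top f)=\mathcal R_\mu(f)$, which is the claim.

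The only real obstacle is degeneracy. $p_f$ is undefined when $\mathbf 1^\top N(f)=0$, i.e.\ when $f$ is constant on each connected component ($f\in\ker L$), and the lemma also needs $f\neq 0$. I will treat this case separately. There $f^\top L_\mu f=0$ as well, since $\mu>0$ and every edge difference vanishes, so both sides are zero under any convention for $p_f$, for instance taking $p_f$ uniform. Otherwise the computation is purely algebraic. The factor-of-two bookkeeping between the node-wise sum and the edge sum is the one place where care is needed.
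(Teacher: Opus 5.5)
Your proof is correct and matches the paper's argument: both write $\mathcal R_\mu(f)=\frac12\mu^\top N(f)$ via the Dirichlet form, factor $N(f)=(\mathbf 1^\top N(f))\,p_f$, and identify $\frac12\mathbf 1^\top N(f)=\mathcal R(f)$ as the $\mu\equiv 1$ case. You also treat separately the case $f\in\ker L$, where $p_f$ is undefined, which the paper omits.
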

Lemma~\ref{lem:Rmu}
shows that $\mu$ controls the spectrum by changing the cost of local variations:
if $\mu$ is large where $f$ varies strongly, then $\mathcal R_\mu(f)$ increases;
if $\mu$ is small in those regions, it decreases. The spectrum of $L_\mu$ is
therefore controlled by the global scale $\|\mu\|_1$, the spectrum of the
underlying graph through $\mathcal R(f)$, and the alignment between $\mu$ and
the local variation profile $p_f$.

Theorem~\ref{thm:bounds}
shows that shaping $\mu$ gives targeted control over the spectrum, beyond a
global rescaling by $\|\mu\|_1$. In particular, eigenvalues can be increased or
decreased depending on how the potential overlaps with the local variation
profiles of the corresponding eigenvectors.
In the next section, we therefore parameterize $\mu$ and use the resulting Laplacian as an adaptive propagation operator in spectral GNN architectures.

We only prove the upper bound as the same reasoning applies to the lower bound.
Let's define the set of Euclidian subspaces of $\mathbb{R}^n$,
\[
\mathcal{F}_k=\left\{F,\ \dim(F)=k+1 \ \text{and}\ 
\lambda_k=\max\left\{\mathcal{R}(f)\mid f\in F,\ \|f\|_2=1\right\}\right\}
\]
and fix $F\in\mathcal{F}_k$. By Lemma \ref{lem:Rmu}
\[
\max_{\substack{f\in F\\ \|f\|_2=1}}\mathcal{R}_{\mu}(f)
=
\max_{\substack{f\in F\\ \|f\|_2=1}}
\|\mu\|_{1}\,\mathbb{E}_{\mu}[p_f]\mathcal{R}(f)
\]
\[
\le
\|\mu\|_{1}
\left(\max_{\substack{f\in F\\ \|f\|_2=1}}\mathbb{E}_{\mu}[p_f]\right)
\left(\max_{\substack{f\in F\\ \|f\|_2=1}}\mathcal{R}(f)\right)
=
\lambda_k\|\mu\|_{1}
\max_{\substack{f\in F\\ \|f\|_2=1}}\mathbb{E}_{\mu}[p_f].
\]

Thus
\[
\min_{F\in\mathcal{F}_{k}}
\max_{\substack{f\in F\\ \|f\|_2=1}}
\mathcal{R}_{\mu}(f)
\le
\lambda_k\|\mu\|_{1}
\min_{F\in\mathcal{F}_{k}}
\max_{\substack{f\in F\\ \|f\|_2=1}}
\mathbb{E}_{\mu}[p_f].
\]

And
\[
\min_{\substack{F\subset\mathbb{R}^n\\ \dim(F)=k+1}}
\max_{\substack{f\in F\\ \|f\|_2=1}}
\mathcal{R}_{\mu}(f)
\le
\min_{F\in\mathcal{F}_{k+1}}
\max_{\substack{f\in F\\ \|f\|_2=1}}
\mathcal{R}_{\mu}(f),
\]
where
\[
\min_{\substack{F\subset\mathbb{R}^n\\ \dim(F)=k+1}}
\max_{\substack{f\in F\\ \|f\|_2=1}}
\mathcal{R}_{\mu}(f)
=
\lambda_k^{\mu}
\]
by the min-max theorem.
\begin{flushright}
    \qed
\end{flushright}% \subsection{Proof of Corollary \ref{cor:upper_bound_1}}
\subsection{Proof of Lemma \ref{lem:Rmu}}
\label{app:Rmu_factorization}
We simply write:
\[
\mathcal{R}_{\mu}(f)
= \frac12\,\mu^{\top}\cdot N(f)
= \|\mu\|_{1}\left(\frac{1}{\|\mu\|_{1}}\mu^{\top}\cdot p_f\right)\cdot \frac12\left(\mathbf{1}^{\top}\cdot N(f)\right)
= \|\mu\|_{1}\,\mathbb{E}_{\mu}[p_f]\,\mathcal{R}(f).
\]
\begin{flushright}
    \qed
\end{flushright}
\section{Details on Section \ref{ssec:weighted-graph-laplacian}}\label{app:Details-geometry}
We now introduce the geometric construction underlying our approach. Rather than adding an explicit advection field to a diffusion equation, we modify the diffusion geometry itself through a positive weight $\mu$. This distinction is important on graphs. In the continuous setting, a term such as $v\cdot\nabla f$ for a scalar function $f$ and $v$ a vector field, is well defined because both $v(x)$ and $\nabla f(x)$ live in the same tangent space. On a graph, however, node signals live on vertices, whereas discrete gradients naturally live on edges. Defining a vector field therefore requires additional non-canonical choices, such as an orientation. Dirichlet forms and Laplacians, in contrast, admit direct graph analogues. This motivates inducing a transport bias through a weighted diffusion geometry rather than by prescribing an explicit advective field.

We first recall the unweighted continuous case. Let $f,g:\Omega\subset\mathbb{R}^d\to\mathbb{R}$ be smooth functions vanishing on $\partial\Omega$. The standard Dirichlet form is
$\mathcal D(f,g)= \frac12\int_\Omega \nabla f(x)\cdot\nabla g(x)dx$. By integration by parts, using the boundary condition, one obtains $\mathcal D(f,g)=-\frac12\int_\Omega g(x)\Delta f(x)\,dx$. Hence, with respect to the standard $L^2(dx)$ inner product, the associated positive operator is $Lf = -\frac12\Delta f$. The corresponding gradient flow is the usual heat equation $\partial_t f=-Lf=\frac12\Delta f$. Thus, the standard Dirichlet form recovers the usual Laplacian and ordinary diffusion.
We now introduce a positive weight $\mu:\Omega\to\mathbb{R}_{>0}$ and define the weighted Dirichlet form
$\mathcal D_\mu(f,g) = \frac12\int_\Omega \mu(x)\nabla f(x)\cdot\nabla g(x)dx$. This form penalizes local variations differently depending on the position. Equivalently, $\mu$ changes the geometry in which diffusion takes place.

There are two useful ways of representing the operator associated with the same Dirichlet form. First, using the weighted inner product $\langle f,g\rangle_\mu=\int_\Omega f(x)g(x)\mu(x)\,dx$, integration by parts gives $\mathcal D_\mu(f,g)=\langle g,L_\mu f\rangle_\mu$, where
\begin{equation}
\label{eq:BE_operator_logmu}
L_\mu f=-\frac{1}{2\mu}\nabla\cdot(\mu\nabla f)=-\frac12\Delta f-\frac12\nabla\log\mu\cdot\nabla f.
\end{equation}

The second term in \eqref{eq:BE_operator_logmu} is first order and therefore has the form of a drift or advective contribution. However, it is not introduced through an independently prescribed velocity field. It is induced by the spatial variation of the weight $\mu$.

Equivalently, one may represent the same Dirichlet form using the standard
$L^2(dx)$ inner product. This gives
\begin{equation}\label{eq:L_tilde}
\widetilde L_\mu f=-\frac12\nabla\cdot(\mu\nabla f)=-\frac12\mu\Delta f-\frac12\nabla\mu\cdot\nabla f.
\end{equation}
The associated diffusion equation is
$\partial_t f=-\widetilde L_\mu f$.
This second representation gives the intuition of a diffusion with spatially varying diffusivity $\mu$. Expanding the weighted diffusion operator in \eqref{eq:L_tilde} produces the first-order term $\nabla\mu\cdot\nabla f$, which acts as a drift-like bias. It appears because the diffusion weight itself varies in space. This is the continuous mechanism that we now transfer to graphs.

In the unweighted graph case, the Dirichlet form is $\mathcal D^G(f,g)=\frac12\sum_{i,j}A_{ij}(f_i-f_j)(g_i-g_j)$. Using the symmetry of $A$, this rewrites as $\mathcal D^G(f,g)=\sum_i g_i\sum_j A_{ij}(f_i-f_j) =g^\top(D-A)f$. Thus, as in the continuous case, the operator represented by the unweighted graph Dirichlet form is the combinatorial Laplacian $L=D-A$.

\section{Experimental setup}\label{app:experimental-setup}
\subsection{Hyper-parameters}
We report in Tables \ref{tab:stablechebnet-synth} and \ref{tab:stablechebnet-proteins} the hyper-parameters used in Section \ref{seq:experiments}.
\begin{table}[h!]
\centering
\begin{tabular}{lcccc}
\toprule
\textbf{Hyper-parameter} & \textbf{Values in grid} & \textbf{Diam} & \textbf{SSSP} & \textbf{Ecc} \\
\midrule
Hidden dimension $d$        & 20, 30, 50                & 50   & 30   & 30 \\
Number of layers            & 1, 2, 3, 5, 10, 20        & 20   & 5    & 5 \\
Polynomial order $K$        & 3, 5, 10                  & 4    & 10   & 10 \\
Step size $\epsilon$        & 0.01, 0.10, 0.20, 0.30    & 0.40 & 0.30 & 0.30 \\
Dissipative force $\gamma$  & 0, 0.01, 0.50, 1          & 0.01 & 0.00 & 0.00 \\
Activation function         & tanh, relu                & relu & relu & relu \\
Learning rate               & 0.001, 0.003              & 0.003 & 0.003 & 0.003 \\
Weight decay                & $1 \times 10^{-6}$        & $1 \times 10^{-6}$ & $1 \times 10^{-6}$ & $1 \times 10^{-6}$ \\
\bottomrule
\end{tabular}
\vspace{10pt}
\caption{Hyper-parameter grid and best settings for our model on three synthetic graph-property benchmarks.}
\label{tab:stablechebnet-synth}
\end{table}

\begin{table}[h!]
\centering
\begin{tabular}{lc}
\toprule
\textbf{Hyper-parameter} & \textbf{Sweep} \\
\midrule
Hidden dim $d$           & 256, 512, 1024 \\
Polynomial order $K$     & 5, 10, 15 \\
Num of layers            & 3, 5, 7 \\
MLP layers               & 1, 2, 3 \\
Step size $\epsilon$     & [0.1, 1.0] \\
Dissipative force $\gamma$ & 0.01, 0.05, 0.1 \\
Batch size               & 512, 1024, 2048 \\
Learning rate            & 0.0005, 0.001, 0.005 \\
Optimizer                & Adam \\
Pos-enc type             & None, Laplacian, RW \\
Pos-enc dim              & 16, 32, 64 \\
\bottomrule
\end{tabular}
\vspace{10pt}
\caption{Hyper-parameter sweep ranges for our model on \texttt{ogbn-proteins}.}
\label{tab:stablechebnet-proteins}
\end{table}

\subsection{Barbell task description}

The Barbell dataset is designed to test whether a GNN can transmit information across a severe graph bottleneck. Each graph consists of two densely connected complete subgraphs, called the “bells,” joined by a narrow path or “bridge” as shown in Figure \ref{fig:barbell-graph}. Nodes inside each bell are highly connected, while the bridge is the only route through which information can pass from one side of the graph to the other. The learning task is node-level regression: each node must predict the average input feature of the nodes in the opposite bell. This makes the dataset a direct probe of long-range communication, because successful prediction requires information to cross the bridge rather than remain trapped within one dense cluster. Performance is measured using mean squared error (MSE), where high errors indicate either over-squashing, when information from the opposite bell fails to pass through the bridge, or over-smoothing, when node representations collapse and lose local distinctions.

\begin{figure}[h!]
    \centering
    \includegraphics[width=0.85\textwidth]{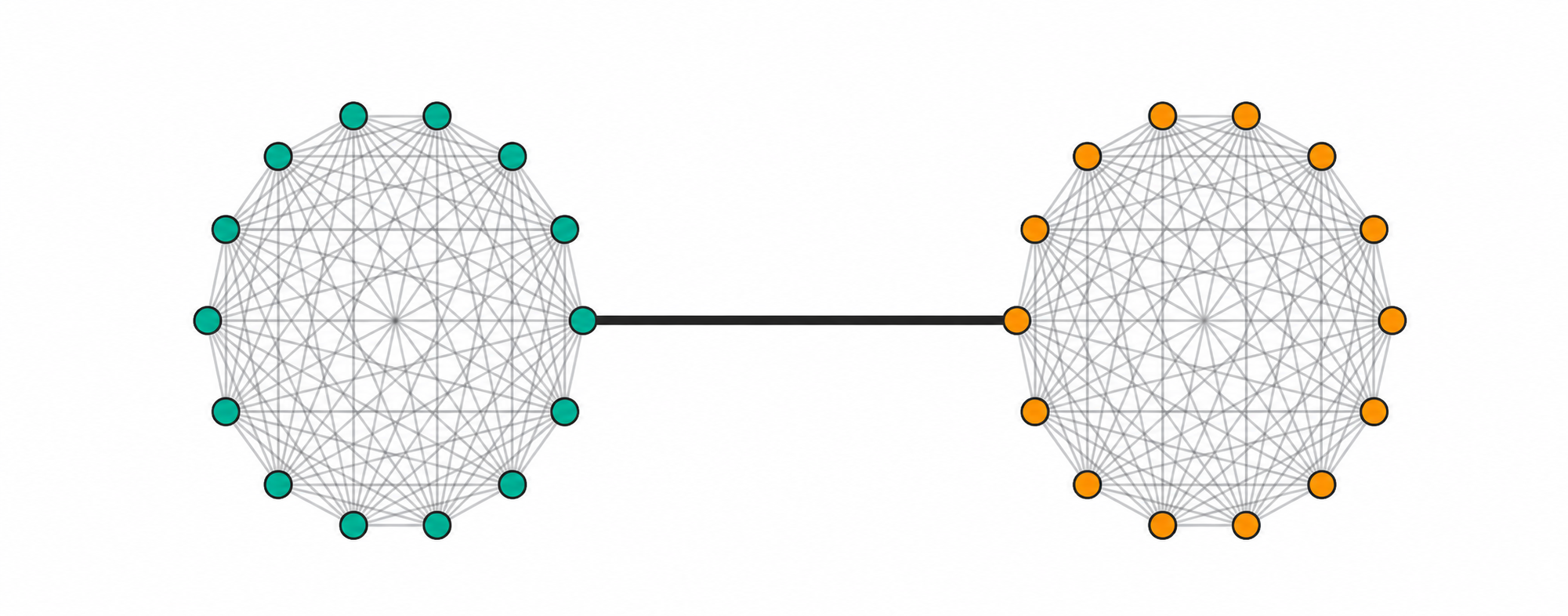}
    \caption{Illustration of a Barbell graph composed of two densely connected node clusters}
    \label{fig:barbell-graph}
\end{figure}
%%%%%%%%%%%%%%%%%%%%%%%%%%%%%%%%%%%%%%%%%%%%%%%%%%%%%%%%%%%%

\newpage
\end{document}